%
\documentclass[runningheads]{llncs}
\usepackage[T1]{fontenc}
%
\usepackage{graphicx}
%
%

\usepackage[utf8]{inputenc} 
\usepackage[T1]{fontenc}    
\usepackage{hyperref}       
\usepackage{booktabs}       
\usepackage{amsfonts}       
\usepackage{nicefrac}       
\usepackage{microtype}      
\usepackage{xcolor}         
\usepackage{dsfont}
\usepackage{xspace}
\usepackage{bbold}
\usepackage{tikz}
\usepackage{subfigure}
\usepackage{pgfplots}
\usepackage{pgfplotstable}
\usepackage{algpseudocode}
\usepackage{fontawesome5}
\usepackage{times}
\usepackage{amsmath}        
\usepackage{amssymb}       
\usepackage{graphicx}      
\usepackage{url}           
\usepackage{algorithm2e}   

\LinesNumbered{}

\setlength{\subfigcapskip}{6pt}     
\setlength{\subfigtopskip}{4pt}     
\setlength{\subfigbottomskip}{4pt}  

\usetikzlibrary{positioning, quotes, calc}
\usetikzlibrary{tikzmark}
\usetikzlibrary{arrows, automata}
\usetikzlibrary{shapes.geometric}

\pgfplotsset{compat=1.18}

\newcommand{\jan}[1]{#1}
\newcommand{\hadi}[1]{#1}


\newcommand{\expect}{\ensuremath{\mathds{E}}}
\newcommand{\mathset}[1]{\{ #1 \}}
\newcommand{\indicator}[1]{\ensuremath{\mathbb{1}_{\mathset{#1}}}}
\newcommand{\reals}{\mathbb{R}}


\newcommand{\fsm}[1]{\mathsf{#1}}
\newcommand{\dfa}[1]{\mathsf{#1}}
\newcommand{\languageOf}{\ensuremath{\mathcal{L}}}
\newcommand{\dfaStates}{Q}
\newcommand{\dfaCommonState}{q}
\newcommand{\dfaInitialState}{\dfaCommonState_{I}}
\newcommand{\dfaTransitions}{\delta}
\newcommand{\dfaAcceptingStates}{F}
\newcommand{\dfaRejectingSinks}{\dfaStates_{\text{r.s.}}}


\newcommand{\mdp}[1]{#1}
\newcommand{\mdpMatrixTransition}{\ensuremath{\mathcal{P}}}
\newcommand{\mdpMatrixReward}{\ensuremath{\mathcal{R}}}
\newcommand{\policy}{\ensuremath{\pi}}
\newcommand{\mdpStates}{S}
\newcommand{\mdpCommonState}{s}
\newcommand{\mdpActions}{A}
\newcommand{\mdpCommonAction}{a}
\newcommand{\mdpReward}{R}
\newcommand{\mdpCommonReward}{r}
\newcommand{\mdpDynamics}{p}
\newcommand{\mdpDiscount}{\gamma}
\newcommand{\mdpTrajectory}[1]{\ensuremath{
\mdpCommonState_0, \mdpCommonAction_0, \mdpCommonState_1, \ldots, \mdpCommonAction_{#1-1}, \mdpCommonState_{#1}
}}

\newcommand{\stateValueFunction}{v}
\newcommand{\optimalStateValueFunction}{\stateValueFunction^{\star}}
\newcommand{\qFunction}{q}
\newcommand{\optimalQFunction}{\qFunction^{\star}}


\newcommand{\atomic}{\ensuremath{\text{AP}}}
\newcommand{\propAlphabet}{\ensuremath{2^\atomic}}

\newcommand{\propInput}{\ell}
\newcommand{\propInputSeq}[1]{\ensuremath{\propInput_0 \propInput_1 \cdots \propInput_{#1}}}
\newcommand{\labelingFunction}{L}

\newcommand{\startLabel}{\faRobot{}}

\newcommand{\prm}[1]{\fsm{#1}}
\newcommand{\prmStates}{U}
\newcommand{\prmTerminals}{F}
\newcommand{\prmCommonState}{u}
\newcommand{\prmInitState}{\prmCommonState_{I}}
\newcommand{\prmInputAlphabet}{\propAlphabet}
\newcommand{\dfaInputAlphabet}{\Sigma}
\newcommand{\prmOutputAlphabet}{\Gamma}
\newcommand{\prmTransitions}{\tau}
\newcommand{\prmOutput}{\sigma}


\newcommand{\causalDiagram}{\mathcal{C}}

\newcommand{\ltlf}{\ensuremath{\text{LTL}_f}}

\newcommand{\nextOp}{\ensuremath{\textbf{X}}}
\newcommand{\untilOp}{\ensuremath{\textbf{U}}}
\newcommand{\weakUntilOp}{\ensuremath{\textbf{W}}}

\newcommand{\globallyOp}{\ensuremath{\textbf{G}}}



\newtheorem{mylemma}{Lemma}

\newtheorem{mytheorem}{Theorem}


\newcommand*{\highlightyellow}[1]{%
  \tikz[baseline=(X.base)] \node[rectangle, fill=yellow!15, rounded corners, inner sep=0.5mm] (X) {#1};%
}

\newcommand*{\highlightblue}[1]{%
  \tikz[baseline=(X.base)] \node[rectangle, fill=blue!15, rounded corners, inner sep=0.5mm] (X) {#1};%
}





\begin{document}
\title{Expediting Reinforcement Learning by Incorporating Knowledge About Temporal Causality in the Environment}
\titlerunning{Temporal Causality and Reinforcement Learning}
%
\author{Jan Corazza\inst{1,2} \and Hadi Partovi Aria\inst{3} \and Daniel Neider\inst{1,2} \and Zhe Xu\inst{3}\thanks{A peer-reviewed version appears in the Proceedings of the Third Conference on Causal Learning and Reasoning (PMLR 236:643-664, 2024)~\cite{tcrl}. Please cite the proceedings version. Source code: \url{https://github.com/corazza/tcrl}.}}

\authorrunning{J. Corazza et al.}

\institute{TU Dortmund University \and
  Research Center Trustworthy Data Science and Security\\
  \email{\{jan.corazza,daniel.neider\}@tu-dortmund.de}
  \and
  Arizona State University\\
  \email{\{hpartovi,xzhe1\}@asu.edu}}

\maketitle              
\begin{abstract}
  Reinforcement learning (RL) algorithms struggle with learning optimal policies for tasks
where reward feedback is sparse and depends on a complex sequence of events in the environment.
Probabilistic reward machines (PRMs) are finite-state formalisms that can capture temporal dependencies in the reward signal, along with nondeterministic task outcomes.
While special RL algorithms can exploit this finite-state structure to expedite learning,
PRMs remain difficult to modify and design by hand.
This hinders the already difficult tasks of utilizing high-level causal knowledge about the environment,
and transferring the reward formalism into a new domain with a different causal structure.
This paper proposes a novel method to incorporate causal information in the form of Temporal Logic-based Causal Diagrams into the reward formalism,
thereby expediting policy learning and
aiding the transfer of task specifications to new environments.
Furthermore, we provide a theoretical result about convergence to optimal policy for our method,
and demonstrate its strengths empirically.

  \keywords{Temporal Causality  \and Reinforcement Learning \and Probabilistic Reward Machines \and Formal Methods.}
\end{abstract}
%
%
%

\section{Introduction}
\hadi{
    Reinforcement Learning (RL) has emerged as the forefront method in providing a robust and general framework for intelligent, autonomous decision-making and learning within complex environments.
    One of the biggest challenges in Reinforcement Learning is integrating high-level, causal knowledge into the learning process.
    Causal reasoning may come naturally to humans, assisting them in navigating the world by making decisions based on more than just observed outcomes; it involves an understanding of how those outcomes come about.
    This is in contrast to traditional RL techniques, which often lack the ability to capture temporal cause-effect relationships and hence offer inefficient learning and decision-making.
    For instance, the knowledge of the likely consequences of actions in terms of future states and rewards can dramatically reduce the amount of exploration that needs to take place to learn effective policies.
    This problem is most evident in settings with long-term consequences, which calls for RL methods that could incorporate causal knowledge directly into their decision process.
}

In RL the interaction between the agent and the environment happens step by step.
Starting in state $\mdpCommonState$ the agent chooses an action $\mdpCommonAction$ with probability $\policy(\mdpCommonAction \mid \mdpCommonState)$ (the policy),
and the environment transitions into a new state $\mdpCommonState'$ and gives a reward $\mdpCommonReward$.
This interaction is formalized in the concept of an MDP,
a tuple $\mdp{M} = (\mdpStates, \mdpActions, \mdpReward, \mdpDynamics, \mdpDiscount)$
where $\mdpStates$ is the set of states,
$\mdpActions$ the set of actions available to the agent,
$\mdpReward : (\mdpStates \times \mdpActions)^\star \times \mdpStates \to \reals$
the reward function mapping trajectories in the MDP to rewards,
$\mdpDynamics(\mdpCommonState' \mid \mdpCommonState, \mdpCommonAction)$
a probabilistic transition function,
and $\mdpDiscount \in (0, 1)$ the discount factor.
The agent's goal is to maximize the expected discounted return,
$\max_{\policy}\expect_{\policy}[\sum_{i=0}^{\infty} \mdpDiscount^i \mdpCommonReward_i]$.
A labeling function
$\labelingFunction(\mdpCommonState, \mdpCommonAction, \mdpCommonState')$
can be provided to attach descriptive propositional variables to transitions in the MDP.
An MDP together with a labeling function is called a labeled MDP.

Although MDPs can have a large number of states and a complex transition function,
one often has access to high-level causal knowledge of the environment.
Figure~\ref{fig:gridworld-coffee-soda} illustrates this point on a small example MDP.
To complete the task, the agent must choose to bring either coffee or a soda to the office.
The high-level knowledge one may supply
is that any path from the soda to the office is later blocked by a flower pot,
which the agent must avoid.
This is due to walls and a one-way door, which constrain the agent's movement.
Although special RL algorithms can find the optimal policy for this task,
they will not take these temporal-causal constraints into account,
and will explore the environment in an inefficient manner.
Unfortunately, employing high-level knowledge about causality has shown to be a difficult task, as the current causal RL approaches (e.g., \cite{Zhang2020a,YangyiLu2021,Wang2021,Bareinboim2015,LeeSanghack2018,Mesnard2021,AndrewForney2017a,Li2021CausalRL}) mostly do not take into account the \textit{temporal} aspect of the causal knowledge.
This paper aims to address this issue by proposing a novel method that incorporates knowledge about causality directly into the reward function.
On the other hand, den Hengst et al.~\cite{denhengst2022planning} propose an approach for safe RL that incorporates symbolic reasoning and a temporal domain. However, their primary focus is on ensuring safety rather than expediting the learning process, distinguishing their work from ours, which specifically targets efficient learning by leveraging causality.

\begin{figure}[t]
    \centering

    \subfigure[A labeled \texttt{5x5} Gridworld with coffee (\texttt{c}), soda (\texttt{s}), an office (\texttt{o}), and a flower pot (\texttt{f}).
        The agent can move in the four cardinal directions,
        and starts in the cell labeled \startLabel{}.
        Other shaded cells are impassable walls.
        One-way doors are represented by the upwards arrow.
        The flower pot acts as a sink state. ]{%
        \begin{minipage}{0.45\textwidth}
            \centering
            \begin{tikzpicture}[scale=0.8]
                \draw (0,0) grid (5,5);
                \fill[green!10] (2.1,0.1) rectangle (2.9,0.9);
                \fill[red!10] (2.1-1,1.1) rectangle (2.9-1,1.9);
                \fill[red!10] (2.1,1.1) rectangle (2.9,1.9);
                \fill[red!10] (2.1,1.1+1) rectangle (2.9,1.9+1);
                \fill[red!10] (2.1,1.1+2) rectangle (2.9,1.9+2);
                \node at (2.5, 4.5) {\tikz[baseline=(char.base)]{\node[line width=1pt,blue,draw,circle,inner sep=2.8pt, text=black] (char) {\texttt{f}};}};
                \node at (2.5,0.5) {\startLabel{}};
                \node at (4.5,4.5) {\texttt{o}};
                \node at (4.5-1,4.5-2) {\texttt{c}};
                \node at (1.5,3.5) {\texttt{s}};
                \draw[->, blue, thick, line width=1pt] (0.5,1.3) -- (0.5,1.7);
            \end{tikzpicture}
            \label{fig:gridworld-coffee-soda}
        \end{minipage}
    }
    \hfill
    \subfigure[A PRM for the task in Figure~\ref{fig:gridworld-coffee-soda} (left).
        Transitions are labeled with propositional formulas and reward outputs.
        Only transition probabilities different from $1$ are shown.
        State $q_4$ is a terminal state which ends the task.
        \jan{Formulas on transitions from $q_0$ to $q_1$ and $q_2$ ($\texttt{c}\land \lnot\texttt{s}$) contain $\lnot\texttt{s}$
            in order to disambiguate the transition function in state $q_0$ on the input $\mathset{\texttt{c}, \texttt{s}}$.
            The same could be achieved by using the formula $\lnot\texttt{c}\land \texttt{s}$ for the transition from $q_0$ to $q_3$,
            and just $\texttt{c}$ for transitions into $q_1$, $q_2$.}]{%
        \begin{minipage}{0.45\textwidth}
            \centering
            \begin{tikzpicture}[->,>=stealth',shorten >=1pt,auto,node distance=3cm,semithick, scale=0.8]
                \node[state, initial, initial text=] (q0) at (-2, 2) {$q_0$};
                \node[state] (q1) at (2, 2) {$q_1$};
                \node[state] (q2) at (0, 0) {$q_2$};
                \node[state] (q3) at (-2, -2) {$q_3$};
                \node[state, accepting] (q4) at (2, -2) {$q_4$};
                \path (q0) edge[bend left=10] node[sloped]{$\texttt{\highlightblue{c}}\land \lnot\texttt{s}$, $0$} node[sloped, swap]{\highlightyellow{$0.9$}} (q1);
                \path (q0) edge[] node[sloped]{$\texttt{\highlightblue{c}}\land \lnot\texttt{s}$, $0$} node[sloped, swap]{\highlightyellow{$0.1$}} (q2);
                \path (q0) edge[bend right=10] node[sloped]{$\texttt{\highlightblue{s}}$, $0$} (q3);
                \path (q1) edge[bend left=10] node[sloped]{$\texttt{\highlightblue{o}}$, $1$} (q4);
                \path (q2) edge[] node[sloped]{$\texttt{\highlightblue{o}}$, $0.1$} (q4);
                \path (q3) edge[bend right=10] node[sloped]{$\texttt{\highlightblue{o}}$, $1$} (q4);
                \path (q0) edge[loop above] node[left]{$\lnot(\texttt{c} \lor \texttt{s})$, $0$} (q1);
                \path (q1) edge[loop above] node[left]{$\lnot\texttt{o}$, $0$} (q1);
                \path (q2) edge[loop below] node[left]{$\lnot\texttt{o}$, $0$} (q2);
                \path (q3) edge[loop left] node[left]{$\lnot\texttt{o}$, $0$} (q3);
            \end{tikzpicture}
            \label{fig:prm-coffee-soda}
        \end{minipage}
    }
    \caption{An MDP (left) and a PRM (right) that captures the task of bringing either coffee or soda to the office.
        The coffee machine has a probability of $10\%$ to malfunction and produce bad coffee, leading to a reduced reward of $0.1$ instead of $1$.
        Bringing soda to the office results in a reward of $1$ deterministically.
        \jan{An example input for the PRM is
            $\mathset{\texttt{c, s}}, \emptyset, \mathset{\texttt{o, c}}$ (a sequence of three labels),
            which will induce the run $q_0 \mapsto q_3 \mapsto q_3 \mapsto q_4$ with a reward of $1$.
            It is important to note that inputs for PRMs are sets of descriptive propositional variables that are true in a given step,
            hence why a single label such as $\mathset{\texttt{c, s}}$ can include multiple (or $0$) variables.}}
    \label{fig:mdp-coffee-soda}
\end{figure}
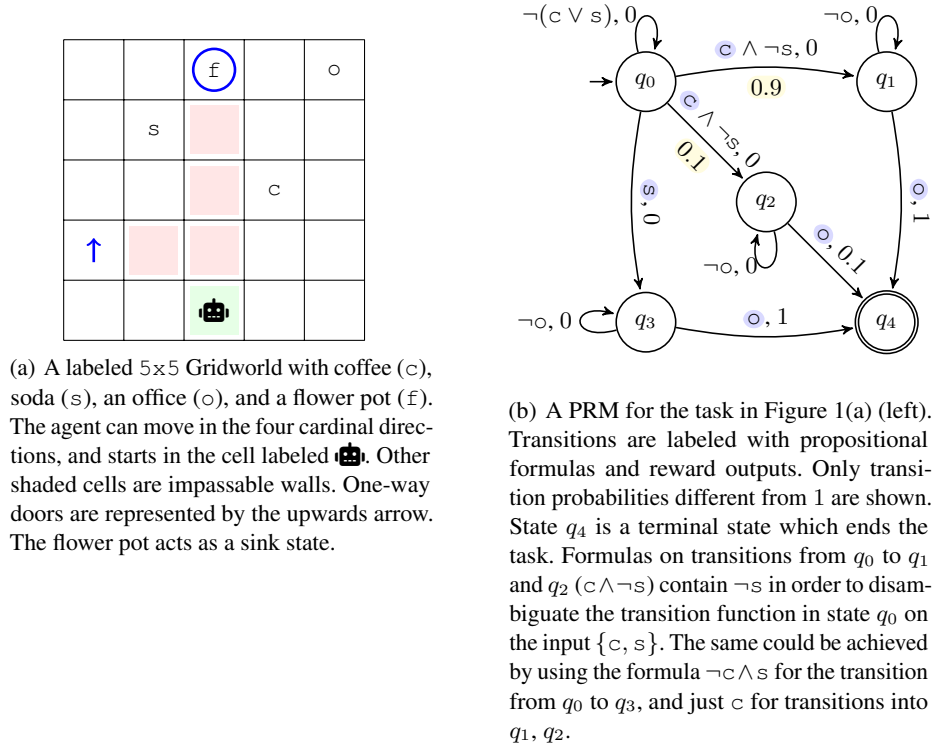

\subsection{Probabilistic Reward Machines}

Common RL algorithms such as Q-learning struggle with tasks where rewards are sparse and depend on a complex sequence of actions that the agent must perform in a specific order.
Reward machines, introduced by Icarte et al.~\cite{DBLP:journals/corr/abs-2010-03950}, are a finite-state formalism that can capture the reward function in such cases.
Q-learning for Reward Machines (QRM), as proposed by Icarte et al.~\cite{DBLP:journals/corr/abs-2010-03950}, can exploit this reward structure to expedite learning the optimal policy.
The QRM algorithm employs reward machines to significantly enhance the efficiency of problem-solving processes.
This algorithm applies an off-policy Q-learning strategy to the reward machine by decomposing it into relevant components at the same time, thereby facilitating the simultaneous learning of each distinct subpolicy.
This methodological approach has been empirically validated, demonstrating the algorithm's capability to converge towards an optimal policy in tabular case.
Velasquez et al.~\cite{DBLP:journals/corr/abs-2107-04633} introduced a more general variant of reward machines called \emph{probabilistic reward machines} (PRMs).
PRMs use a nondeterministic transition function that can capture uncertainty in task outcomes.
In the example from Figure~\ref{fig:mdp-coffee-soda},
uncertainty comes from the fact that the coffee machine may malfunction.
Definition~\ref{def:prm} formalizes this notion of a
finite-state representation of a temporally extended task with probabilistic outcomes.

\begin{definition}[Probabilistic Reward Machine (PRM)]
    \label{def:prm}
    A PRM
    $\prm{A} = (\prmStates, \prmInitState, \prmInputAlphabet, \prmOutputAlphabet, \prmTransitions,\allowbreak \prmOutput, \prmTerminals)$ is a tuple where
    $\prmStates$ is a finite set of states
    with a distinguished initial state $\prmInitState \in \prmStates$,
    $\atomic$ is a set of atomic propositions and $\prmInputAlphabet$ is the set of labels,
    $\prmOutputAlphabet \subset \reals$ is a finite set of rewards,
    $\prmTransitions : (\prmStates \times \prmInputAlphabet \times \prmStates) \to [0, 1]$ is a probabilistic transition function,
    $\prmOutput : (\prmStates \times \prmInputAlphabet \times \prmStates ) \to \prmOutputAlphabet$ is a function mapping each transition to a reward in $\prmOutputAlphabet$, and
    $\prmTerminals \subseteq \prmStates$ is a finite set of terminal states that signal the end of the interaction.
\end{definition}

The agent-environment interaction generates a trajectory $\mdpTrajectory{n}$
and the corresponding label sequence $\propInputSeq{n-1}$,
where $\labelingFunction(\mdpCommonState_i, \mdpCommonAction_i, \mdpCommonState_{i+1}) = \propInput_i$
for all $i = 0, \ldots, n-1$.
The state $\mdpCommonState_0$ may be a unique initial state, or drawn from an initial distribution.
After reading a label $\propInput$ in state $\prmCommonState$,
the PRM executes a nondeterministic transition into a new state $\prmCommonState'$
with probability $\prmTransitions(\prmCommonState, \propInput, \prmCommonState')$,
and the agent receives a reward
$\mdpCommonReward = \prmOutput(\prmCommonState, \propInput, \prmCommonState')$.
A run of a PRM $\prm{A}$ on a label sequence $\propInputSeq{n-1}$ is a sequence
$\prmCommonState_0, \mdpCommonReward_0, \prmCommonState_1, \ldots, \mdpCommonReward_{n-1}, \prmCommonState_n$
where $\prmCommonState_0 = \prmInitState$,
and for all $i = 0, \ldots, n-1$,
$\prmTransitions(\prmCommonState_{i}, \propInput_i, \prmCommonState_{i+1}) > 0$ and
$\prmOutput(\prmCommonState_i, \propInput_i, \prmCommonState_{i+1}) = \mdpCommonReward_i$.

\subsection{Temporal Logic-based Causal Diagrams}
\label{sec:tlcds}

Linear temporal logic over finite sequences ($\ltlf$) is a formal reasoning system
that can capture causal and temporal properties of label sequences and labeled MDPs.
Aside from Boolean operators like $\lnot$ and $\lor$, $\ltlf$ introduces temporal operators such as $\globallyOp \psi$ (true if and only if $\psi$ holds for every element in the sequence), $\nextOp \psi$ (true iff. $\psi$ holds for the next element of the sequence), and $\psi \untilOp \varphi$ (true iff. $\psi$ holds until $\varphi$ becomes true, and $\varphi$ is true in some element of the sequence).
We also rely on the weak until operator $\psi \weakUntilOp \varphi$
(true iff. $\psi$ holds until $\varphi$ becomes true, but $\varphi$ is not required to become true).

In order to encode knowledge about causality in the underlying MDP, we rely on Temporal Logic-based Causal Diagrams (TL-CDs) introduced in Paliwal et al.~\cite{10.1007/978-3-031-40837-3_8}.
TL-CDs are a special notation that expresses the causal relationship between formulas in $\ltlf$.
The first conjunct induced by the TL-CD in Figure~\ref{fig:cd-soda},
$\globallyOp(\texttt{s} \rightarrow \lnot \texttt{o} \weakUntilOp \texttt{f})$,
means that if the agent observes $\texttt{s}$ (soda) in any step, then it will not observe $\texttt{o}$ (the office) before it observes $\texttt{f}$ (the flower pot).
This part of the TL-CD encodes knowledge that soda may only be reached via a one-way door,
and the only other exit towards the office will be blocked by the flower pot.

\begin{figure}
    \centering
    \subfigure[$\globallyOp(\texttt{s} \rightarrow \lnot \texttt{o} \weakUntilOp \texttt{f}) \land
      \globallyOp(\texttt{f} \rightarrow \globallyOp \lnot \texttt{o})$ ]{%
    \begin{minipage}{0.45\textwidth}
    \centering
    \begin{tikzpicture}
      \node[draw, ultra thick, circle] (p1) at (0, 0) {$\texttt{s}$};
      \node[draw, ultra thick, circle] (p2) at (2, 0) {$\lnot \texttt{o} \weakUntilOp \texttt{f}$};
      \node[draw, ultra thick, circle] (p3) at (0, -1.5) {$\texttt{f}$};
      \node[draw, ultra thick, circle] (p4) at (2, -1.5) {$\globallyOp \lnot \texttt{o}$};
      \draw[-latex] (p1) -- (p2) node[midway, above] {};
      \draw[-latex] (p3) -- (p4) node[midway, above] {};
    \end{tikzpicture}
    \label{fig:cd-soda}
    \end{minipage}
    }
    \hfill
    \subfigure[{$\globallyOp(\texttt{a} \rightarrow \globallyOp \lnot \texttt{b})$}]{%
    \begin{minipage}{0.45\textwidth}
    \centering
    \begin{tikzpicture}
      \node[draw, ultra thick, circle] (p) at (0, 0) {$\texttt{a}$};
      \node[draw, ultra thick, circle] (q) at (2, 0) {$\globallyOp \lnot \texttt{b}$};
      \draw[-latex] (p) -- (q) node[midway, above] {};
    \end{tikzpicture}
    \label{fig:cd-collection}
    \end{minipage}
    }
    \caption{Figure~\ref{fig:cd-soda} (left) is the TL-CD which captures relevant causal information in the environment from Figure~\ref{fig:gridworld-coffee-soda}. Figure~\ref{fig:cd-collection} (right) is a TL-CD that holds for the case study in Figure~\ref{fig:mdp-collection_second_case}.}
    \label{fig:mdp-collection}
\end{figure}
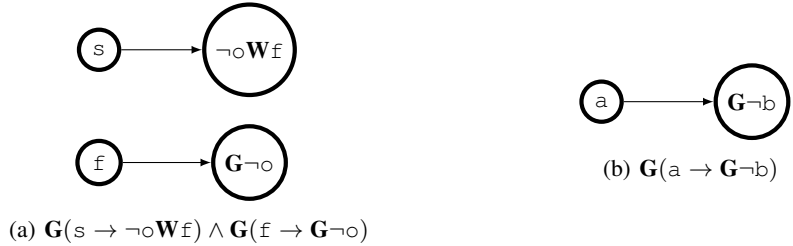

Formally, a TL-CD is a directed graph whose nodes are labeled with $\ltlf$ formulas.
For a TL-CD $\causalDiagram$ one may construct an equivalent $\ltlf$ formula
$\varphi^\causalDiagram$ through Equation~\ref{eqn:tlcd},
where $\varphi \blacktriangleright \psi$ iterates over edges that connect formulas $\varphi$ and $\psi$ in the TL-CD.

\begin{equation}
    \varphi^\causalDiagram = \bigwedge\limits_{\varphi \blacktriangleright \psi}
    \globallyOp(\varphi \rightarrow \psi)
    \label{eqn:tlcd}
\end{equation}

If $\varphi^\causalDiagram$ is true for a label sequence $\propInput$,
we will write $\propInput \models \varphi^\causalDiagram$.
A label sequence $\propInput = \propInputSeq{n-1}$ is attainable in an MDP
$\mdp{M} = (\mdpStates, \mdpActions, \mdpReward, \mdpDynamics, \mdpDiscount)$
if there exists a trajectory $\mdpTrajectory{n}$ in $\mdp{M}$ such that
$\labelingFunction(\mdpCommonState_i, \mdpCommonAction_i, \mdpCommonState_{i+1}) = \propInput_i$ and
$\mdpDynamics(\mdpCommonState_i, \mdpCommonAction_i, \mdpCommonState_{i+1}) > 0$
for all $i = 0, 1, \ldots, n-1$.
We will say that a TL-CD $\causalDiagram$ holds for an MDP $\mdp{M}$
if for every label sequence $\propInput$ attainable in $\mdp{M}$,
we have $\propInput \models \varphi^\causalDiagram$.
\jan{In order to simplify working with TL-CDs, we leverage the notion of deterministic finite automata (DFAs).
    We formalize this notion in Definition~\ref{def:dfa}.}

\begin{definition}[Deterministic Finite Automaton (DFA)]
    A DFA is a tuple $\dfa{C} = (\dfaStates, \dfaInitialState, \dfaInputAlphabet,\allowbreak \dfaTransitions, \dfaAcceptingStates)$
    consisting of a finite set of states $\dfaStates$ with an initial state $\dfaInitialState$, input alphabet $\dfaInputAlphabet$, deterministic transition function $\dfaTransitions : \dfaStates \times \dfaInputAlphabet \to \dfaStates$,
    and a finite set of accepting states $\dfaAcceptingStates \subseteq \dfaStates$.
    \label{def:dfa}
\end{definition}

If the run of the DFA $\dfa{C}$ on an input string $\propInput$ ends in an accepting state $\dfaCommonState \in \dfaAcceptingStates$, we will write $\propInput \in \languageOf(\fsm{C})$.
Every TL-CD $\causalDiagram$ can be converted into an equivalent
DFA $\fsm{C}$,
in the sense that for every $\propInput$,
we have $\propInput \in \languageOf(\fsm{C}) \iff \propInput \models \varphi^\causalDiagram$.
We will refer to $\dfa{C}$ as the \emph{causal DFA}.

\section{Problem statement}

One may use QRM
to find the optimal policy for the task in Figure~\ref{fig:mdp-coffee-soda}.
However, the PRM in Figure~\ref{fig:prm-coffee-soda} does not take flower pots and one-way doors into account.
Because the agent does not know that knocking over flower pots is forbidden
or that choosing soda causes him to enter a room blocked by a flower pot,
it will waste time exploring those fruitless trajectories.
As PRMs are in essence task specifications,
and one may also wish to transfer them into a new environment while preserving the overall goal.
In both cases, high-level insights about causality,
especially its temporal aspects,
could prove helpful by reflecting the dynamics of the MDP in condensed form.

Unfortunately, incorporating knowledge about temporal causality into the reward function remains a difficult and error-prone manual task.
In PRMs, this would necessitate adding new states and reasoning about a different, more complicated transition function.
Some methods such as JIRP, proposed by Xu et al.~\cite{DBLP:journals/corr/abs-1909-05912} and SRMI, proposed by Corazza et al.~\cite{Corazza_Gavran_Neider_2022} assume that a suitable but unknown representation of the reward function exists,
and attempt to recover it from interaction traces.
This work proposes an alternative method that leverages TL-CDs in order to automate the process of incorporating knowledge about causality into PRMs.
More formally, the problem can be stated as follows.
Given a TL-CD $\causalDiagram$ which holds for an MDP $\mdp{M}$
and a PRM $\prm{A}$,
produce a PRM $\prm{B}$ that induces the same optimal policy as $\prm{A}$,
but utilizes causal information in $\causalDiagram$ to expedite learning.

\section{Method}
\label{sec:method}

This section introduces a novel approach for incorporating causal information from a TL-CD into the reinforcement learning process via a PRM. By creating a product of a TL-CD, represented as a causal DFA, with a PRM, we aim to enhance the RL agent's ability to efficiently learn and make decisions in complex environments.

We first consider the equivalent causal DFA for a given TL-CD.
As explained in Section~\ref{sec:tlcds}, the equivalent causal DFA captures the same semantics as the given TL-CD.
While TL-CDs are an intuitive notational tool, DFAs are easier to work with computationally.
The causal DFA for the TL-CD in Figure~\ref{fig:cd-soda} is shown in Figure~\ref{fig:causal-dfa}
(in two parts for convenience).
State $u_3$ is a \emph{sink} state, meaning that any run of the DFA which enters $u_3$ will never leave it.
It is also a \emph{rejecting} state.
Taken together, this means that any label sequence for which the causal DFA enters $u_3$ is not the prefix of an attainable sequence in an MDP $\mdp{M}$ if we assume that the TL-CD holds for $\mdp{M}$.

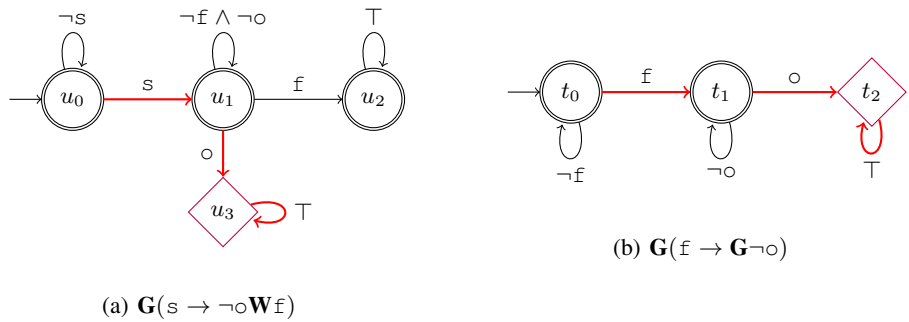
\begin{figure}[ht]
    \centering
    \subfigure[$\globallyOp(\texttt{s} \rightarrow \lnot \texttt{o} \weakUntilOp \texttt{f})$ ]{%
    \begin{minipage}{0.45\textwidth}
    \centering
    \begin{tikzpicture}[node distance=2cm]
    \node[state, initial, accepting, initial text=] (u0) {$u_0$};
    \node[state,accepting] (u1) [right of=u0] {$u_1$};
    \node[state, accepting] (u2) [right of=u1] {$u_2$};
    \node[state,diamond, draw=purple] (u3) at (2, -1.5) {$u_3$};
    \path (u0) edge[->, draw=red, thick] node[above]{$\texttt{s}$} (u1);
    \path (u1) edge[->] node[above]{$\texttt{f}$} (u2);
    \path (u1) edge[->, draw=red, thick] node[left]{$\texttt{o}$} (u3);
    \path (u0) edge[loop above] node[above]{$\lnot \texttt{s}$} (u0);
    \path (u1) edge[loop above] node[above]{$\lnot \texttt{f} \land \lnot \texttt{o}$} (u1);
    \path (u2) edge[loop above] node[above]{$\top$} (u2);
    \path (u3) edge[loop right, draw=red, thick] node[right]{$\top$} (u3);
\end{tikzpicture}
    \label{fig:causal-dfa-a}
    \end{minipage}
    }
    \hfill
    \subfigure[$\globallyOp(\texttt{f} \rightarrow \globallyOp \lnot \texttt{o})$]{%
    \begin{minipage}{0.45\textwidth}
    \centering
    \begin{tikzpicture}[node distance=2cm]
        \node[state, initial, accepting, initial text=] (t0) {$t_0$};
        \node[state] (t1) [right of=u0, accepting] {$t_1$};
        \node[state, diamond, draw=purple] (t2) [right of=t1] {$t_2$};
        \path (t0) edge[->, draw=red, thick] node[above]{$\texttt{f}$} (t1);
        \path (t1) edge[->, draw=red, thick] node[above]{$\texttt{o}$} (t2);
        \path (t0) edge[loop below] node[below]{$\lnot \texttt{f}$} (t0);
        \path (t1) edge[loop below] node[below]{$\lnot \texttt{o}$} (t1);
        \path (t2) edge[loop below, draw=red, thick] node[below]{$\top$} (t2);
    \end{tikzpicture}
    \label{fig:causal-dfa-b}
    \end{minipage}
    }
    \caption{Two factors of the causal DFA for the TL-CD in Figure~\ref{fig:cd-soda}.
    Rejecting sink states are diamond-shaped.
    Their parallel composition is the true causal DFA,
    and its states come from the Cartesian product of states in this Figure.
    For example, the initial state is $(u_0, t_0)$.
    }
    \label{fig:causal-dfa}
\end{figure}

In other words,
when a causal DFA run reaches a rejecting sink state
on some prefix of input labels,
then the entire input label sequence is unattainable.
The reason is that there is no suffix of labels which can
cause the run to transition into an accepting state.
From now on we will implicitly consider causal DFAs to have at most one rejecting sink state,
and that an accepting state is reachable from all other states.
This can be achieved by minimization~\cite{sipser13}.

We propose to incorporate causal information from a TL-CD $\causalDiagram$ into a PRM $\prm{A}$
by computing state values in a new PRM
$\prm{B_1} = \causalDiagram \times \prm{A}$,
which is a product of the TL-CD (represented by the causal DFA $\dfa{C}$) and $\prm{A}$.
The product PRM $\prm{B_1}$ synchronizes the runs of the original PRM $\prm{A}$
and the causal DFA $\dfa{C}$.
$\prm{B_1}$ mirrors the output of $\prm{A}$,
except when $\dfa{C}$ transitions into a rejecting sink state.
Then the output of $\prm{B_1}$ is set to a minimal value $m$
that is lesser than any possible immediate reward and resulting future gain,
and will remain there for the rest of the run as $\dfa{C}$ will not leave the sink state.
We also compute state values in a ``pessimistic'' PRM $\prm{B_2} = \causalDiagram \times \prm{(-A)}$ in order to uncover temporal-causal information about worst-case reward outcomes.
While $\prm{B_1}$ outputs the same rewards as $\prm{A}$,
$\prm{B_2}$ negates outputs of $\prm{A}$
(but also gives minimal outputs $m$ for transitions into rejecting sink states).
Because of the minimal reward output $m$, value iteration in either $\prm{B_1}$ or $\prm{B_2}$ will disregard transitions that lead $\dfa{C}$ into a rejecting sink state, as explained under Figure~\ref{fig:product-fragment}.
Due to negating reward outputs, label sequences that maximize return in $\prm{B_2}$, minimize the return in $\prm{B_1}$.
We combine state value information from $\prm{B_1}$ and $\prm{B_2}$ into a final PRM $\prm{B}$.
To obtain $\prm{B}$, we start from $\prm{B_1}$, and add all states $\prmCommonState \in \prmStates^{\prm{B_1}}$ that have $0$ value in both the machine $\prm{B_1}$ and $\prm{B_2}$
($\optimalStateValueFunction_{\prm{B_1}}(\prmCommonState) = \optimalStateValueFunction_{\prm{B_2}}(\prmCommonState) = 0$)
into the set of terminal states $\prmTerminals^{\prm{B_1}}$.
Such states have the property that no matter the policy,
the future return is constrained with $0$ from above and below
(and thus, the choice of actions is of no consequence).
The product $\causalDiagram \times \prm{A}$ is formalized in Definition~\ref{def:product}.
We define the value of a PRM state $\prmCommonState$ via the Bellman optimality equation~\ref{eqn:state-value}, where $\mdpDiscount$ matches the discount factor in the MDP.
\begin{equation}
    \optimalStateValueFunction(\prmCommonState) =
    \max_{\propInput \in \prmInputAlphabet}
    \sum_{\prmCommonState' \in \prmStates}
    \prmTransitions\left(\prmCommonState, \propInput, \prmCommonState'\right) \cdot
    \left(\prmOutput\left(\prmCommonState, \propInput, \prmCommonState'\right) + \mdpDiscount \optimalStateValueFunction\left(\prmCommonState'\right)\right)
    \label{eqn:state-value}
\end{equation}

As Equation~\ref{eqn:state-value} is an optimality equation,
$\optimalStateValueFunction(\prmCommonState)$ is the expected return of a PRM run starting in $\prmCommonState$ and following the most optimistic label sequence
(which may or may not be attainable in the MDP).
We define the minimal reward output $m$ as $m = -1 - \max_{r \in \prmOutputAlphabet^\prm{A}}{|r|} - \max_{\prmCommonState \in \prmStates^\prm{A}}{\optimalStateValueFunction(\prmCommonState)}$.
While it may be simpler to use $m=-\infty$, we compute a concrete bound in order to better communicate how our method makes use of state value information. In brief, the formula for $m$ is inspired by the Bellman optimality operator used in value iteration. The terms can be explained in the following way. First, $- \max_{r \in \prmOutputAlphabet^\prm{A}}{|r|}$ ensures that the reward is lower than any other immediate reward in the original PRM. Second, $- \max_{\prmCommonState \in \prmStates^\prm{A}}{\optimalStateValueFunction(\prmCommonState)}$ ensures that the reward is lower than any possible future gain starting from a state in the original PRM. Taken together, these two terms ensure that transitions that correspond to rejecting sink states do not contribute to state values.

\begin{definition}[PRM \& TL-CD product]
    Let $\mdp{M} = (\mdpStates, \mdpActions, \mdpReward, \mdpDynamics, \mdpDiscount)$ be an MDP
    where the reward function $\mdpReward : (\prmInputAlphabet)^\star \to \prmOutputAlphabet$ is given by the PRM
    $\prm{A} = (\prmStates^\prm{A}, \prmInitState^\prm{A}, \prmInputAlphabet, \prmOutputAlphabet^\prm{A}, \prmTransitions^\prm{A}, \prmOutput^\prm{A},\allowbreak \prmTerminals^\prm{A})$,
    $\causalDiagram$ a TL-CD that holds for $\mdp{M}$, and
    $\dfa{C} = (\dfaStates, \dfaInitialState, \prmInputAlphabet, \dfaTransitions, \dfaAcceptingStates_\dfa{C})$
    its equivalent minimal causal DFA with states $\dfaStates$,
    initial state $\dfaInitialState$,
    a set of accepting states $\dfaAcceptingStates_\dfa{C} \subseteq \dfaStates$,
    and transition function $\dfaTransitions$.
    Let $\dfaRejectingSinks \subseteq \dfaStates \setminus \dfaAcceptingStates_\dfa{C}$
    be the set of rejecting sink states of $\dfa{C}$.

    We define the product $\causalDiagram \times \prm{A}$ as a new PRM
    $(\prmStates, \prmInitState, \prmInputAlphabet, \prmOutputAlphabet, \prmTransitions, \prmOutput, \prmTerminals)$,
    where

    \begin{enumerate}
        \item $\prmStates = \prmStates^\prm{A} \times \dfaStates$,
              a state of $\causalDiagram \times \prm{A}$ is a pair of states $(\prmCommonState, \dfaCommonState)$ with
              $\prmCommonState \in \prmStates^\prm{A}$ and $\dfaCommonState \in \dfaStates$;
        \item $\prmInitState = (\prmInitState^\prm{A}, \dfaInitialState)$,
              the initial state in $\causalDiagram \times \prm{A}$ is the pair of initial states of $\prm{A}$ and $\dfa{C}$;
        \item $\prmOutputAlphabet = \prmOutputAlphabet^\prm{A} \cup \{m\}$,
              the output alphabet of $\causalDiagram \times \prm{A}$ is expanded with a possible reward output that is $1$ less than any output in the original set of rewards from $\prm{A}$;
        \item $\prmTransitions((\prmCommonState, \dfaCommonState), \propInput, (\prmCommonState', \dfaCommonState'))
                  = \prmTransitions^\prm{A}(\prmCommonState, \propInput, \prmCommonState')
                  \cdot \indicator{\dfaTransitions(\dfaCommonState, \propInput) = \dfaCommonState'}$,
              the probability of $\causalDiagram \times \prm{A}$ transitioning from $(\prmCommonState, \dfaCommonState)$ to
              $(\prmCommonState', \dfaCommonState')$ upon reading $\propInput$ is the same as
              the probability of $\prm{A}$ transitioning from $\prmCommonState$ to $\prmCommonState'$,
              given that $\dfa{C}$ transitions from $\dfaCommonState$ to $\dfaCommonState'$
              (otherwise, the probability is $0$);
        \item $\prmOutput((\prmCommonState, \dfaCommonState), \propInput, (\prmCommonState', \dfaCommonState')) =
                  \begin{cases}
                      \prmOutput^\prm{A}(\prmCommonState, \propInput, \prmCommonState')                                                                                  & \dfaCommonState' \not\in \dfaRejectingSinks \\
                      m = -1 - \max_{r \in \prmOutputAlphabet^\prm{A}}{|r|} - \max_{\prmCommonState \in \prmStates^\prm{A}}{\optimalStateValueFunction(\prmCommonState)} & \text{otherwise}
                  \end{cases}$,
              the output of the product PRM agrees with $\prm{A}$ except when $\dfa{C}$ transitions into a rejecting sink state; and
        \item $\prmTerminals = \{(\prmCommonState, \dfaCommonState) : \prmCommonState \in \prmTerminals^\prm{A}\}$,
              terminal states in $\causalDiagram \times \prm{A}$ correspond to terminal states in $\prm{A}$.
    \end{enumerate}

    \label{def:product}
\end{definition}

Performing value iteration acts as a form of look-ahead in the product $\causalDiagram \times \prm{A}$,
whose output function is defined so
that transitions which lead the causal DFA into a rejecting sink state do not contribute to overall state value.
The same is true for $B_2 = \causalDiagram \times (-\prm{A})$, which is defined in the same way, except the output function $-\prmOutput^\prm{A}(\prmCommonState, \propInput, \prmCommonState')$ provides look-ahead information about the worst-case future outcome.
Our method, given in Algorithm~\ref{alg:causal}, improves the convergence speed of QRM
by utilizing information about expected rewards that better reflects the temporal causal structure of the environment.
In Theorem~\ref{thm:convergence} we show that our method converges to the optimal policy in the limit.

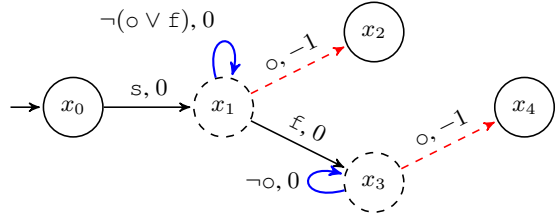
\begin{figure}[ht]
\centering
\begin{tikzpicture}[->,>=stealth',shorten >=1pt,auto,node distance=1.7cm,semithick]
\node[state, initial, initial text=] (x0) at (-2, 0) {$x_0$};
\node[state, draw=black, dashed] (x1) at (0, 0) {$x_1$};
\node[state] (x2) at (2, 1) {$x_2$};
\node[state, draw=black, dashed] (x3) at (2, -1) {$x_3$};
\node[state] (x4) at (4, 0) {$x_4$};

\path (x0) edge[] node[sloped]{$\texttt{s}$, $0$} (x1);
\path (x1) edge[draw=red, dashed] node[sloped]{$\texttt{o}$, $-1$} (x2);
\path (x1) edge[] node[sloped]{$\texttt{f}$, $0$} (x3);
\path (x3) edge[draw=red, dashed] node[sloped]{$\texttt{o}$, $-1$} (x4);

\path (x1) edge[loop above,draw=blue,thick] node[above left]{$\lnot(\texttt{o} \lor \texttt{f})$, $0$} (x1);
\path (x3) edge[loop left,draw=blue,thick] node[left]{$\lnot\texttt{o}$, $0$} (x3);


\end{tikzpicture}
\caption{A fragment of the product of the PRM from Figure~\ref{fig:prm-coffee-soda} and the TL-CD from Figure~\ref{fig:cd-soda}. Inheriting the $q$, $u$, and $t$ names of PRM and causal DFA states from previous figures,
$x_0 = (q_0, u_0, t_0)$,
$x_1 = (q_3, u_1, t_0)$,
$x_2 = (q_4, u_3, t_0)$,
$x_3 = (q_3, u_2, t_1)$, and
$x_4 = (q_4, u_2, t_2)$.
Due to the maximum in Equation~\ref{eqn:state-value},
dashed transitions do not contribute to state value.
Dashed states $x_1$ and $x_3$ have $0$ value in both $B_1$ (depicted) and $B_2$,
and will be added to the set of terminal states. }
\label{fig:product-fragment}
\end{figure}

\begin{algorithm}
    \caption{Reinforcement Learning With Temporal-Causal Information}
    \label{alg:causal}
    \SetAlgoLined
    \SetKwInOut{Require}{Require}
    \SetKwFor{For}{for each}{do}{end for}
    \SetKwFor{While}{while}{do}{end while}

    \Require{MDP $\mdp{M}$, PRM $\prm{A}$, minimal causal DFA $\dfa{C}$ with rejecting sink states $\dfaRejectingSinks$}

    $\prm{B_1}, \prm{B_2} \leftarrow \text{computeProduct}(\prm{A}, \dfa{C})$, $\text{computeProduct}(-\prm{A}, \dfa{C})$ \label{line:product}

    $\optimalStateValueFunction_\prm{B_1}, \optimalStateValueFunction_\prm{B_2} \leftarrow \text{valueIteration}(\prm{B_1}, \mdpDiscount)$, $\text{valueIteration}(\prm{B_2}, \mdpDiscount)$ \label{line:value-iteration}

    $\prm{B} \leftarrow \prm{B_1}$ \label{line:b-b1}

    \label{line:t3-begin} \For{$\prmCommonState \in \prmStates^\prm{B}$}
    {
        \If{$\optimalStateValueFunction_\prm{B_1}(\prmCommonState) = \optimalStateValueFunction_\prm{B_2}(\prmCommonState) = 0$}
        {
            Add $\prmCommonState$ to the set of terminal states of $\prm{B}$ \label{line:t3-end}
        }
    }

    $Q \leftarrow \text{initializeQFunction}()$

    \While{\text{termination criteria not met}}
    {
        $Q \leftarrow \text{RunQRMEpisode}(Q, \prm{B})$
    }
    \Return $Q$
\end{algorithm}

\begin{mytheorem}[Convergence to Optimal Policy]
    Let $\mdp{M}$ be an MDP with a non-Markovian reward function captured by PRM $\prm{A}$.
    Let $\causalDiagram$ be a TL-CD that holds for $\mdp{M}$,
    and $\dfa{C}$ the corresponding minimal causal DFA with rejecting sink states $\dfaRejectingSinks$.
    Then Algorithm~\ref{alg:causal} converges to an optimal policy for $\mdp{M}$ with respect to $\prm{A}$.
    In particular, we can easily recover the optimal policy for $(\mdp{M}, \prm{A})$ from the optimal policy for $(\mdp{M}, \prm{B})$ found in the algorithm.
    \label{thm:convergence}
\end{mytheorem}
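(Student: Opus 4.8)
The plan is to reduce the statement to the known convergence guarantee of QRM and then verify that the construction of $\prm{B}$ from $\prm{A}$ does not change the optimal policy. \textbf{Step 1 (reduction to QRM).} First I would invoke the convergence result for QRM on probabilistic reward machines~\cite{DBLP:journals/corr/abs-2107-04633,DBLP:journals/corr/abs-2010-03950}: in the tabular setting, running QRM on a PRM $\prm{P}$ together with $\mdp{M}$ converges to an optimal policy of the product MDP $\mdp{M}\times\prm{P}$, whose states are pairs $(\mdpCommonState,\prmCommonState)$ and whose reward is read off $\prm{P}$. Since Algorithm~\ref{alg:causal} runs QRM on $\prm{B}$, it converges to an optimal policy of $\mdp{M}\times\prm{B}$. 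Moreover, an optimal policy for $\mdp{M}$ with respect to $\prm{A}$ is, by definition, an optimal policy of $\mdp{M}\times\prm{A}$, since $(\mdpCommonState,\prmCommonState)$ is a sufficient statistic of the history for the non-Markovian reward encoded by $\prm{A}$. It therefore suffices to show that an optimal policy of $\mdp{M}\times\prm{B}$ yields an optimal policy of $\mdp{M}\times\prm{A}$.

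\textbf{Step 2 ($\prm{B_1}$ preserves optimal policies).} Because $\causalDiagram$ holds for $\mdp{M}$, the causal DFA $\dfa{C}$ accepts every label sequence attainable in $\mdp{M}$, so along every positive-probability trajectory of $\mdp{M}$ the DFA coordinate never enters $\dfaRejectingSinks$. By Definition~\ref{def:product}, on such trajectories the output of $\prm{B_1}$ coincides with that of $\prm{A}$, and the terminal states of $\prm{B_1}$ project exactly onto those of $\prm{A}$; moreover the DFA coordinate is a deterministic function of the label history and influences neither the transition probabilities nor the rewards along attainable trajectories. Hence the reachable part of $\mdp{M}\times\prm{B_1}$ is a lift of $\mdp{M}\times\prm{A}$ along a redundant coordinate (a surjective MDP homomorphism): the two have equal optimal values on corresponding states, and an optimal policy of $\mdp{M}\times\prm{B_1}$ induces an optimal policy of $\mdp{M}\times\prm{A}$ by tracking the causal DFA coordinate --- a function of the observed labels --- as additional memory.

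\textbf{Step 3 (the added terminal states carry value $0$ under every policy --- the crux).} Note that $\prm{B_2}$ has the same state space, transitions and terminal states as $\prm{B_1}$ and differs only in that its output negates that of $\prm{A}$ (still emitting a minimal value on transitions into $\dfaRejectingSinks$), so a state $\prmCommonState$ can be viewed in both. Fix such a state with $\optimalStateValueFunction_{\prm{B_1}}(\prmCommonState)=\optimalStateValueFunction_{\prm{B_2}}(\prmCommonState)=0$. I claim $\stateValueFunction^{\policy}_{\mdp{M}\times\prm{B_1}}(\mdpCommonState,\prmCommonState)=0$ for every policy $\policy$ and every reachable $(\mdpCommonState,\prmCommonState)$. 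For the upper bound, $(\mdpCommonState,\prmCommonState)\mapsto\optimalStateValueFunction_{\prm{B_1}}(\prmCommonState)$ is a supersolution of the Bellman optimality equation of $\mdp{M}\times\prm{B_1}$: under any action the next label follows some distribution, and averaging the bracket of Equation~\ref{eqn:state-value} over it never exceeds the maximum over labels, which equals $\optimalStateValueFunction_{\prm{B_1}}(\prmCommonState)$; since $\mdpDiscount<1$ a supersolution dominates the optimal value, so $\stateValueFunction^{\policy}_{\mdp{M}\times\prm{B_1}}(\mdpCommonState,\prmCommonState)\le\optimalStateValueFunction_{\prm{B_1}}(\prmCommonState)=0$. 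For the lower bound, since $\causalDiagram$ holds for $\mdp{M}$ every realizable trajectory of $\mdp{M}\times\prm{B_2}$ avoids $\dfaRejectingSinks$, so along it the output of $\prm{B_2}$ is exactly the negation of that of $\prm{B_1}$; as the two product MDPs differ only in their reward functions, coupling them gives $\stateValueFunction^{\policy}_{\mdp{M}\times\prm{B_2}}(\mdpCommonState,\prmCommonState)=-\stateValueFunction^{\policy}_{\mdp{M}\times\prm{B_1}}(\mdpCommonState,\prmCommonState)$, and the upper-bound argument applied to $\prm{B_2}$ yields $\stateValueFunction^{\policy}_{\mdp{M}\times\prm{B_2}}(\mdpCommonState,\prmCommonState)\le\optimalStateValueFunction_{\prm{B_2}}(\prmCommonState)=0$, hence $\stateValueFunction^{\policy}_{\mdp{M}\times\prm{B_1}}(\mdpCommonState,\prmCommonState)\ge 0$.

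\textbf{Step 4 (conclusion and recovery).} By construction $\prm{B}$ is $\prm{B_1}$ with precisely the states identified in Step 3 promoted to terminal states. Terminating an episode at such a state replaces a continuation worth $0$ under every policy with the terminal value $0$, so $\stateValueFunction^{\policy}_{\mdp{M}\times\prm{B}}=\stateValueFunction^{\policy}_{\mdp{M}\times\prm{B_1}}$ for every $\policy$ (policies on the truncated states being extended or restricted arbitrarily, which is harmless by Step 3); in particular $\mdp{M}\times\prm{B}$ and $\mdp{M}\times\prm{B_1}$ have the same optimal policies. Chaining Steps 1, 2 and 4, the policy to which Algorithm~\ref{alg:causal} converges is optimal for $\mdp{M}\times\prm{B}$, hence for $\mdp{M}\times\prm{B_1}$, hence --- after reconstructing the observation-determined causal DFA coordinate --- optimal for $\mdp{M}\times\prm{A}$, i.e.\ optimal for $\mdp{M}$ with respect to $\prm{A}$; this reconstruction of the causal DFA state from the observed labels is the promised easy recovery of the optimal policy for $(\mdp{M},\prm{A})$ from that for $(\mdp{M},\prm{B})$. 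I expect Step 3 to be the main obstacle: one must make rigorous both that the optimistic PRM value of Equation~\ref{eqn:state-value} genuinely dominates the return of every policy in the product MDP, and that the sign symmetry between $\prm{B_1}$ and $\prm{B_2}$ survives --- which is exactly where the hypothesis that $\causalDiagram$ holds for $\mdp{M}$ is indispensable, since it guarantees the minimal output on transitions into $\dfaRejectingSinks$ is never actually incurred along attainable trajectories and therefore does not break the negation identity.
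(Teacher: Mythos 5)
Your proposal is correct and follows the same overall decomposition as the paper's proof: the paper introduces three transformations (Lemma~\ref{lemma:t1}: parallel composition with the causal DFA; Lemma~\ref{lemma:t2}: rewriting outputs on transitions into $\mdp{M}$-unreachable states; Lemma~\ref{lemma:t3}: terminalizing states with $\optimalStateValueFunction_{\prm{B_1}}=\optimalStateValueFunction_{\prm{B_2}}=0$) and then appeals to QRM convergence, exactly matching your Steps 1, 2 and 4, with your Step 2 merging the first two lemmas into a single ``redundant coordinate'' argument. Where you genuinely diverge is in the crux (your Step 3 versus the paper's proof of Lemma~\ref{lemma:t3}). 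The paper establishes the sandwich $-\optimalStateValueFunction_{\prm{B_2}}(\prmCommonState^0)\le \stateValueFunction^{\mdp{M}\times\prm{B_1}}_{\pi}(\mdpCommonState^0)\le \optimalStateValueFunction_{\prm{B_1}}(\prmCommonState^0)$ by declaring the upper bound ``trivial'' and arguing the lower bound through an explicit manipulation of the PRM-level Bellman optimality equations (Equations~\ref{eqn:value-iteration-1} and~\ref{eqn:value-iteration-2}), which crucially invokes the \emph{dependent set} structure of Definition~\ref{def:dependent-unreachable} (determinism of the causal DFA forces all positive-probability successors into $V$ simultaneously, so the minimal reward $m$ knocks those labels out of the $\max$). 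You instead make the upper bound rigorous via a supersolution argument (the PRM-level maximum over labels dominates the MDP-level average over the induced label distribution, and a supersolution of a $\gamma$-contractive optimality operator dominates the fixed point), and you obtain the lower bound from the pointwise negation identity $\stateValueFunction^{\policy}_{\mdp{M}\times\prm{B_2}}=-\stateValueFunction^{\policy}_{\mdp{M}\times\prm{B_1}}$ on the reachable part of the product, which holds because the hypothesis that $\causalDiagram$ holds for $\mdp{M}$ guarantees the $m$-rewards are never incurred along attainable trajectories. This buys you two things: the ``trivial'' inequality is actually proved, and the dependent-set machinery is not needed at all for this step, being absorbed into the attainability argument; the cost is that your bound is stated only on reachable product states, whereas the paper's PRM-level computation is MDP-agnostic (which is what lets Algorithm~\ref{alg:causal} run value iteration on the PRM alone, before any interaction with $\mdp{M}$) --- but since unreachable states are irrelevant to the optimal policy, this does not affect correctness.
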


In the full proof of Theorem~\ref{thm:convergence},
we introduce $3$ transformations on PRMs that realize our method of combining a PRM with a TL-CD.
In brief, these transformations allow us to
(1) take the parallel composition of a PRM and a DFA,
(2) change the outputs of PRMs on transitions into unreachable states, and
(3) add states into the set of terminal states of the PRM (under certain conditions);
all the while preserving the optimal policy.
We then show how Algorithm~\ref{alg:causal} applies these transformations in order to arrive at the desired PRM $\prm{B}$.
Since these transformations preserve the optimal policy
(or allow for easy recovery of it),
we conclude that QRM using the transformed PRM $\prm{B}$ converges to the optimal policy.
In brief, the core contribution of the modifications we propose lies in the ability to exploit causal knowledge contained in the TL-CD.
Although we significantly change the structure of PRM by combining it with the TL-CD,
we prove that the optimal policy remains the same.
However, the changed structure allows for a more nuanced calculation of PRM state values that is not blind to the temporal-causal relations that hold for the MDP.
More precisely, we are able to obtain upper and lower bounds on state values,
and prove that under certain conditions one does not need to explore the MDP
(specifically, when both the upper and lower value bounds are $0$).
In doing so, we improve the balance between exploration and exploitation and increase the sample efficiency of our algorithm.
See the Appendix for further details about the function that computes the PRM and causal DFA intermediate product, and the full proof of Theorem~\ref{thm:convergence}.

\section{Case Studies}
\label{sec:case-studies}

Our method shows promising results across two case studies.
The first case study (results in Figure~\ref{fig:results-comparison-a}) is based on the \emph{coffee vs. soda} example from Figure~\ref{fig:mdp-coffee-soda}.
The second case study (results in Figure~\ref{fig:results-comparison-b}) is described in Figure~\ref{fig:mdp-collection}.

\begin{figure}[ht]
    \centering
    \subfigure[Coffee vs. soda task]{%
        \begin{minipage}{0.45\textwidth}
            \centering
            \begin{tikzpicture}[scale=1]
                \begin{axis}[
                        width=\linewidth, 
                        xlabel={Step},
                        ylabel={Reward per step},
                        grid=major,
                        legend style={
                                at={(0.05,0.95)},
                                anchor=north west,
                                font=\tiny,
                            },
                    ]
                    \addplot [mark=square*, blue] table [col sep=comma, x=Step, y=Value] {results/soda_qrm_results_without_causal.csv};
                    \addlegendentry{No causal}
                    \addplot [mark=*, red, densely dashed] table [col sep=comma, x=Step, y=Value] {results/soda_qrm_results_including_causal.csv};
                    \addlegendentry{Causal}
                \end{axis}
            \end{tikzpicture}\\
            \label{fig:results-comparison-a}
        \end{minipage}
    }
    \hfill 
    \subfigure[Two-doors task]{%
        \begin{minipage}{0.45\textwidth}
            \centering
            \begin{tikzpicture}[scale=1]
                \begin{axis}[
                        width=\linewidth, 
                        xlabel={Step},
                        grid=major,
                    ]
                    \addplot [mark=square*, blue] table [col sep=comma, x=Step, y=Value] {results/collection_qrm_results_without_causal.csv};
                    \addplot [mark=*, red, densely dashed] table [col sep=comma, x=Step, y=Value] {results/collection_qrm_results_including_causal.csv};
                \end{axis}
            \end{tikzpicture}\\
            \label{fig:results-comparison-b}
        \end{minipage}
    }
    \caption{\hadi{Reward} per step averaged over $20$ runs. ``No causal'' refers to using QRM with the original PRM that does not account for additional causal information in the environment. ``Causal'' are the results for our method. Both graphs showcase QRM convergence to the optimal policy.}
    \label{fig:results-comparison}
\end{figure}
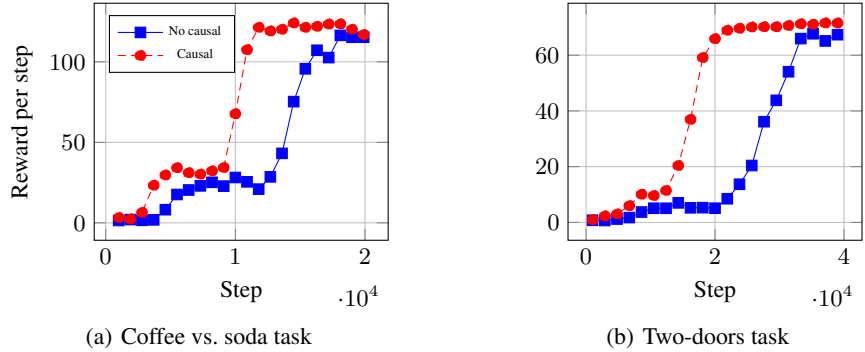

We compared our method against QRM without access to knowledge about causality.
In both case studies, our method takes significantly fewer steps to converge to the optimal policy.

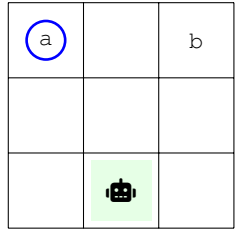
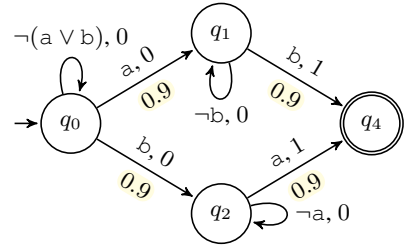
\begin{figure}
  \centering
  \subfigure[\texttt{3x3} Gridworld environment where the agent must open both door A (\texttt{a}) and door B (\texttt{b}) in any order. However, the cell with door A traps the agent.
    The agent can fail at opening the doors with probability $0.1$.]{%
    \begin{minipage}{0.45\textwidth}
      \centering
      \begin{tikzpicture}[scale=1]
        \draw (0,0) grid (3,3);
        \fill[green!10] (1.1,0.1) rectangle (1.9,0.9);
        \node at (1.5,0.5) {\startLabel{}};
        \node at (0.5,2.5) {\tikz[baseline=(char.base)]{\node[line width=1pt,blue,draw,circle,inner sep=2.8pt, text=black] (char) {\texttt{a}};}};
        \node at (2.5,2.5) {\texttt{b}};
      \end{tikzpicture}
      \label{fig:mdp-collection_a}
    \end{minipage}
  }
  \hfill
  \subfigure[The PRM without causal info about the two-door task. Missing transitions are all self-loops with probability $0.1$.]{%
    \begin{minipage}{0.45\textwidth}
      \centering
      \begin{tikzpicture}[->,>=stealth',shorten >=1pt,auto,node distance=3cm,semithick, scale=0.8]
        \node[state, initial, initial text=] (q0) at (-0.5, 0) {$q_0$};
        \node[state] (q1) at (2, 1.5) {$q_1$};
        \node[state] (q2) at (2, -1.5) {$q_2$};
        \node[state, accepting] (q3) at (4.5, 0) {$q_4$};
        \path (q0) edge[] node[sloped]{$\texttt{a}$, $0$} node[sloped, swap]{\highlightyellow{$0.9$}} (q1);
        \path (q0) edge[] node[sloped]{$\texttt{b}$, $0$} node[sloped, swap]{\highlightyellow{$0.9$}} (q2);
        \path (q1) edge[] node[sloped]{$\texttt{b}$, $1$} node[sloped, swap]{\highlightyellow{$0.9$}} (q3);
        \path (q2) edge[] node[sloped]{$\texttt{a}$, $1$} node[sloped, swap]{\highlightyellow{$0.9$}} (q3);
        \path (q0) edge[loop above] node[above]{$\lnot(\texttt{a} \lor \texttt{b})$, $0$} (q0);
        \path (q1) edge[loop below] node[below]{$\lnot\texttt{b}$, $0$} (q1);
        \path (q2) edge[loop right] node[right]{$\lnot\texttt{a}$, $0$} (q2);
      \end{tikzpicture}
      \label{fig:pcd-collection-b}
    \end{minipage}
  }
  \caption{The MDP and PRM for the second case study. The TL-CD that adds causal information regarding the sink door A can be found on Figure~\ref{fig:cd-collection}.
    It states that after seeing door A, the agent can not later see door B.
  }
  \label{fig:mdp-collection_second_case}
\end{figure}

For a more thorough comparison and analysis of the method's efficiency, we implemented it across two distinct case studies: a four-door task and a small office world domain. The third case study entails an agent navigating through a scenario where it must open four doors in any arbitrary order, as illustrated in Figure~\ref{fig:grid6-for-4door}. This task involves a significantly more complex PRM owing to the number of possible orders. To evaluate the method's performance and its efficacy in this case study, we use a grid world configuration of 6 $\times$ 6.

\begin{figure}[ht]
    \centering
    \subfigure[\texttt{6x6} Gridworld environment where the agent must open door A (\texttt{a}), door B (\texttt{b}), door C (\texttt{c}), and door D (\texttt{d}) in any order. However, the cell with door D traps the agent. The agent may fail to open door B with a probability of 0.1, and it will go to door D instead.]{%
    \begin{minipage}{0.45\textwidth}
    \centering
    \begin{tikzpicture}[scale=0.6]
            \draw (0,0) grid (6,6);
            \fill[green!10] (3.1,3.1) rectangle (3.9,3.9);
            \node at (3.5,3.5) {\startLabel{}};
            \node at (5.5,2.5) {\tikz[baseline=(char.base)]{\node[line width=1.0pt,blue,draw,circle,inner sep=2.6pt, text=black] (char) {\texttt{d}};}}; 
            \node at (5.5,5.5) {\texttt{c}};      
            \node at (5.5,0.5) {\texttt{b}};
            \node at (0.5,0.5) {\texttt{a}};
        \end{tikzpicture}
    \label{fig:grid6-for-4door}
    \end{minipage}
    }
    \hfill
    \subfigure[TL-CD that holds for the case study of four-doors task.]{%
    \begin{minipage}{0.45\textwidth}
    \centering
    \raisebox{0.5cm}{%
        \begin{tikzpicture}[scale = 0.9]
          \node[draw, ultra thick, circle] (p) at (-1, 0) {$\texttt{d}$};
          \node[draw, ultra thick, circle] (q) at (2, 0) {$\globallyOp \lnot (\texttt{a} \lor \texttt{b} \lor \texttt{c})$};
          \draw[-latex] (p) -- (q) node[midway, above] {};
        \end{tikzpicture}
        }
    \label{fig:tl-cd_4doors}
    \end{minipage}
    }
    \caption{The MDP and Causal DFA for the third case study.
    }
    \label{fig:4door-cusaldfa-grid}
\end{figure}
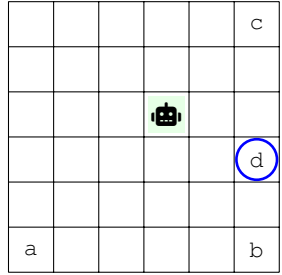
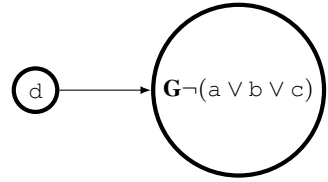
\begin{figure}[ht]
    \centering
    \subfigure[Map of the small office world. Shaded cells are impassable walls.]{%
        \begin{minipage}{0.9\textwidth}
            \centering
            \begin{tikzpicture}[scale=0.5]
                \draw[step=1cm,black!50] (0,0) grid (17,9);
                \fill[green!10] (8,4) rectangle (9,5);
                \node at (8.5,4.5) {\startLabel{}};
                \node at (0.5,8.5) {$e_1$};
                \node at (12.5,8.5) {$e_2$};
                \node at (2.5,5.5) {$k_1$};
                \node at (15.5,7.5) {$k_2$};
                \node at (9.5,4.5) {$b$};
                \node at (7.5,4.5) {$a$};
                \node at (14.5,4.5) {$c$};
                \fill[red!10] (2+0.1,3+0.1) rectangle (5-0.1,4-0.1);
                \fill[red!10] (4+0.1,4+0.1) rectangle (5-0.1,7-0.1);
                \fill[red!10] (6+0.1,9-0.1) rectangle (7-0.1,5+0.1);
                \fill[red!10] (6+0.1,6-0.1) rectangle (11-0.1,5+0.1);
                \fill[red!10] (10+0.1,9-0.1) rectangle (11-0.1,5+0.1);
                \fill[red!10] (6+0.1,0+0.1) rectangle (7-0.1,4-0.1);
                \fill[red!10] (6+0.1,3+0.1) rectangle (11-0.1,4-0.1);
                \fill[red!10] (10+0.1,0+0.1) rectangle (11-0.1,4-0.1);
                \fill[red!10] (13+0.1,9-0.1) rectangle (14-0.1,5+0.1);
                \fill[red!10] (13+0.1,4-0.1) rectangle (14-0.1,2+0.1);
                \fill[red!10] (13+0.1,3-0.1) rectangle (15-0.1,2+0.1);
                \fill[red!10] (14+0.1,3-0.1) rectangle (15-0.1,0+0.1);
                \draw[->,thick,blue] (0.5,9) -- (0.5,9.5);
                \draw[->,thick,blue] (12.5,9) -- (12.5,9.5);
                \draw[->,thick,blue] (8.1,4.5) -- (6.5,4.5);
                \draw[->,thick,blue] (8.9,4.5) -- (10.5,4.5);
                \draw[orange, thick] (8,5) -- (8,4);
                \draw[orange, thick] (9,5) -- (9,4);
                \draw[orange, thick] (13,5) -- (13,4);
                \draw[blue, thick] (13,5) -- (15,5);
                \draw[blue, thick] (13,4) -- (16,4);
                \draw[blue, thick] (15,5) -- (15,8);
                \draw[blue, thick] (16,4) -- (16,8);
                \draw[->,thick,blue] (13.1,4.5) -- (13.7,4.5);
                \draw[->,thick,blue] (15.5,4.5) -- (15.5,4.9);
                \draw[->,thick,blue] (15.5,4.5) -- (15.5,4.9);
                \draw[->,thick,blue] (15.5,5.3) -- (15.5,6);
                \draw[->,thick,blue] (15.5,6.3) -- (15.5,7);
            \end{tikzpicture}
            \label{fig:map-small-office-1}
        \end{minipage}
    }

    \subfigure[The PRM without causal info about the small office world.]{%
        \begin{minipage}{0.45\textwidth}
            \centering
            \begin{tikzpicture}[->,>=stealth',shorten >=1pt,auto,node distance=3cm,semithick, scale=0.6]
                \node[state, initial, initial text=] (q0) at (0, 0) {$q_0$};
                \node[state] (q1) at (3, 2) {$q_1$};
                \node[state] (q2) at (5.5, 2) {$q_2$};
                \node[state] (q3) at (3, -2) {$q_3$};
                \node[state] (q4) at (5.5, -2) {$q_4$};
                \node[state, accepting] (q5) at (8, 0) {$q_5$};
                \path (q0) edge[loop below] node[below]{$\lnot(\texttt{a} \lor \texttt{b} )$, $0$}node[below, yshift=-3ex]{\highlightyellow{$1$}}(q0);
                \path (q0) edge[] node[sloped]{$\texttt{a} \land \lnot\texttt{b}$, $0$} node[sloped, swap]{\highlightyellow{$1$}} (q1);
                \path (q0) edge[] node[sloped]{$\texttt{b}$, $0$} node[sloped, swap]{\highlightyellow{$1$}} (q3);
                \path (q1) edge[loop above] node[above, yshift=2.5ex]{$\lnot \texttt{k}_{1}$, $0$}node[below, yshift=3ex]{\highlightyellow{$1$}}(q1);
                \path (q1) edge[] node[sloped]{$\texttt{k}_{1}$, $0$} node[sloped, swap]{\highlightyellow{$1$}} (q2);
                \path (q2) edge[loop above] node[above, yshift=2.5ex]{$\lnot \texttt{e}_{1}$, $0$}node[below, yshift=3ex]{\highlightyellow{$1$}}(q2);
                \path (q2) edge[loop below] node[below, yshift=-0.3ex]{$ \texttt{e}_{1}$, $0$}node[below, yshift=-3ex]{\highlightyellow{$0.1$}}(q2);
                \path (q2) edge[] node[sloped]{$\texttt{e}_{1}$, $1$} node[sloped, swap]{\highlightyellow{$0.9$}} (q5);
                \path (q3) edge[loop below] node[above, yshift=-3ex]{$\lnot \texttt{k}_{2}$, $0$}node[below, yshift=-2.5ex]{\highlightyellow{$1$}}(q3);
                \path (q3) edge[] node[sloped]{$\texttt{k}_{2}$, $0$} node[sloped, swap]{\highlightyellow{$1$}} (q4);
                \path (q4) edge[loop below] node[above, yshift=-3ex]{$\lnot \texttt{e}_{2}$, $0$}node[below, yshift=-2.5ex]{\highlightyellow{$1$}}(q4);
                \path (q4) edge[] node[sloped]{$\texttt{e}_{2}$, $1$} node[sloped, swap]{\highlightyellow{$1$}} (q5);
                \path (q5) edge[loop above] node[above]{$\top$}node[below]{\highlightyellow{$ $}}(q5);
            \end{tikzpicture}
            \label{fig:prm-small-office}
        \end{minipage}
    }
    \hfill
    \subfigure[TL-CD that holds for the case study of the small office.]{%
        \begin{minipage}{0.45\textwidth}
            \centering
            \begin{tikzpicture}[scale = 0.85]
                \node[draw, ultra thick, circle] (p1) at (-4, 0) {$\texttt{b}$};
                \node[draw, ultra thick, circle] (q1) at (-4, -2) {$\globallyOp \lnot \texttt{e}_{1}$};
                \draw[-latex] (p1) -- (q1) node[midway, left] {};
                \node[draw, ultra thick, circle] (p2) at (-2, 0.5) {$\texttt{c}$};
                \node[draw, ultra thick, ellipse, minimum width=1.2cm, minimum height=0.8cm](q2) at (-2, -1.5) {$\nextOp\nextOp\nextOp\nextOp k_2$};
                \draw[-latex] (p2) -- (q2) node[midway, left] {};
                \node[draw, ultra thick, circle] (p3) at (0, 0) {$\texttt{k}_{2}$};
                \node[draw, ultra thick, circle] (q3) at (0, -2) {$\globallyOp \lnot \texttt{e}_{2}$};
                \draw[-latex] (p3) -- (q3) node[midway, left] {};
            \end{tikzpicture}
            \label{fig:tl-cd_small-office}
        \end{minipage}
    }
    \caption{\hadi{The PRM and Causal DFA for the fourth case study, alongside the office map for reference.}}
    \label{fig:office-case-study-combined}
\end{figure}
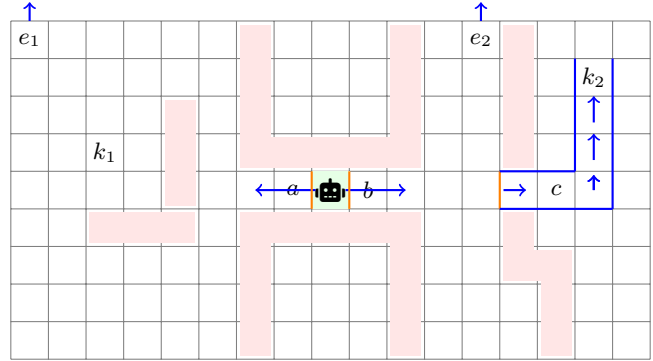
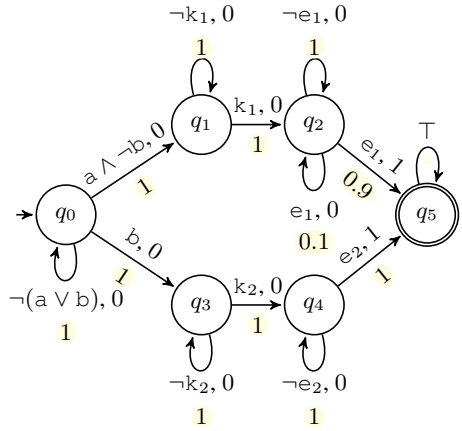
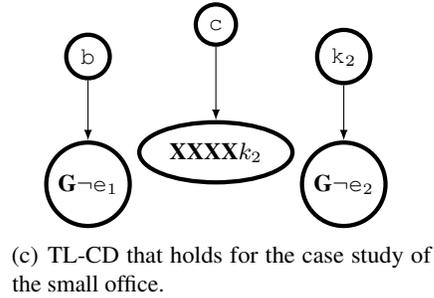

The agent here must open door A, door B, door C, and door D in any order. However, door D is a trap, and the agent cannot see doors A, B, or even C after seeing door D. This knowledge, in fact, is encoded in Figure \ref{fig:tl-cd_4doors}. As this task requires a complex Probabilistic Reward Machine (PRM), we deemed it prudent to relegate its detailed explanation to the Appendix.

Furthermore, Figure \ref{fig:results-prm-4-doors} compares our method on the four-doors task to QRM without additional causal information. It can be seen that QRM with causal information results in much higher rewards with faster convergence.




Another case study in which we implemented this method is the small office world domain. For this specific exploration, we considered a small office world with a spatial layout of 17 $\times$ 9, similar to the setup in Paliwal et al.~\cite{10.1007/978-3-031-40837-3_8}.
Within the scope of this case study, the procedure to exit the grid entails a two-step process for the agent: first, it must obtain one of the two available keys, denoted as $k_1$ or $k_2$, and then navigate to exit $e_1$ or $e_2$, correspondingly aligned with the key acquired. Through one-way doors (indicated by blue arrows), keys, and walls, the agent interacts with the environment. A graphical illustration of this environment, capturing the elements and challenges the agent faces, is provided in Figure \ref{fig:map-small-office-1}, providing a better understanding of the structural and operational complexities of the small office world being explored.

As a result of $c$ being a one-way door, the agent will not be able to pick up key $k_2$ and exit at $e_2$, due to the information encoded in Figure~\ref{fig:tl-cd_small-office}.
In addition, if the agent passes through the door $b$, it will not be able to exit through the door $e_1$.
Furthermore, Figure~\ref{fig:prm-small-office} displays the PRM, omitting the causal information regarding the small office world.
In order to succeed in exiting the maze and receiving reward 1, the agent must complete both sequences a$-k_1-e_1$ (open door a, pick up key $k_1$, and leave at $e_1$) or b$-k_2-e_2$ (open door b, pick up key $k_2$, and exit at $e_2$). However, a probability of 0.9 suggests a likelihood of the agent exiting through $e_1$, while a probability of 0.1 indicates a risk of the agent getting stuck.

Figure~\ref{fig:results-prm-small-office} depicts the performance comparison of our method on the small office world scenario to QRM without additional causal information. In the figure, it can be seen that if the RL agent knows the causal DFA and learns never to open door $b$, the agent can obtain their optimal reward faster with higher accumulated rewards.

\begin{figure}[ht]
    \centering
    \subfigure[Four-doors task results.]{%
        \begin{minipage}{0.48\textwidth}
            \centering
            \begin{tikzpicture}[scale=1]
                \begin{axis}[
                        width=\linewidth, 
                        xlabel={Step},
                        ylabel={Reward per step},
                        grid=major,
                        legend style={
                                font=\tiny,
                                at={(0.6,0.8)},
                                anchor=north west,
                            },
                    ]
                    \addplot [mark=square*, blue] table [col sep=comma, x=Step, y=Value] {results/collection4_qrm_results_without_causal.csv};
                    \addlegendentry{No causal}
                    \addplot [mark=*, red,densely dashed] table [col sep=comma, x=Step, y=Value] {results/collection4_qrm_results_including_causal.csv};
                    \addlegendentry{Causal}
                \end{axis}
            \end{tikzpicture}
            \label{fig:results-prm-4-doors}
        \end{minipage}
    }
    \hfill 
    \subfigure[Small office world results.]{%
        \begin{minipage}{0.48\textwidth}
            \centering
            \begin{tikzpicture}[scale=1]
                \begin{axis}[
                        width=\linewidth, 
                        xlabel={Step},
                        ylabel style={overlay, anchor=north,yshift=10pt},
                        grid=major,
                    ]
                    \addplot [mark=square*, blue] table [col sep=comma, x=Step, y=Value] {results/collection3_qrm_results_without_causal.csv};
                    \addplot [mark=*, red, densely dashed] table [col sep=comma, x=Step, y=Value] {results/collection3_qrm_results_including_causal.csv};
                \end{axis}
            \end{tikzpicture}
            \label{fig:results-prm-small-office}
        \end{minipage}
    }
    \caption{\hadi{Comparison of task results, using the same reward per step metric averaged over 20 runs.}}
    \label{fig_case3&4}
\end{figure}
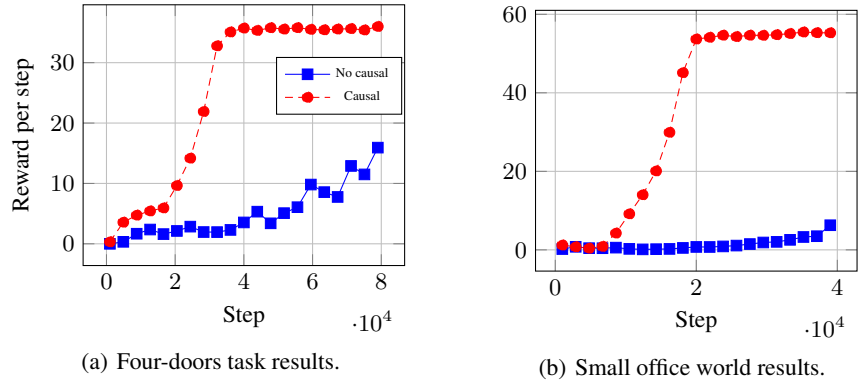

\subsection{Performance Impact of Proposed Modifications}

Our method makes significant structural changes to the underlying PRM
in order to incorporate temporal-causal information contained in TL-CDs.
One of the primary effects of these changes is increasing the size of the state space.
Despite this increase in size,
experimental results demonstrate that our algorithm improves convergence properties on realistic examples
when the provided information is \emph{useful}
(allows for pruning redundant paths from the PRM where exploration is not necessary).
Strictly speaking, however, we only require that the causal diagrams are \emph{correct} (that they hold for the MDP),
not that they contain useful information.

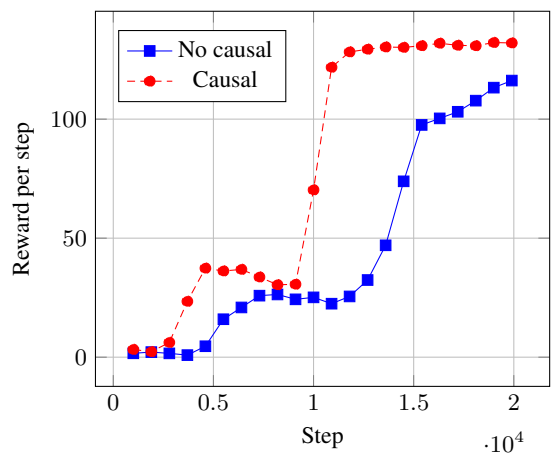
\begin{figure}[t]
    \centering
    \vspace{-2ex}
    \begin{tikzpicture}
        \begin{axis}[
                width=3in,
                xlabel={Step},
                ylabel={Reward per step},
                grid=major,
                legend style={
                        at={(0.05,0.95)}, 
                        anchor=north west, 
                    },
            ]
            \addplot [mark=square*, blue] table [col sep=comma, x=Step, y=Value] {results/soda_spurious_qrm_results_without_causal.csv};
            \addlegendentry{No causal}
            \addplot [mark=*, red, densely dashed] table [col sep=comma, x=Step, y=Value] {results/soda_spurious_qrm_results_including_causal.csv};
            \addlegendentry{Causal}
        \end{axis}
    \end{tikzpicture}
    \caption{
        The Coffee vs. soda task.
        ``Causal'' are the results for our method, but with additional, redundant causal knowledge added.
        The resulting PRM has $5$ times more states than the one from Figure~\ref{fig:results-comparison-a},
        but our method achieves the same performance.
        Like before, ``No causal'' refers to using QRM with the original PRM
        (which has neither causal information nor redundancy added).
    }
    \label{fig:results-spurious}
\end{figure}

Being robust to mistakes and imprecisions in user-provided causal information (or even malicious inputs) is beneficial.
We hypothesize that our algorithm possesses this property,
in the sense that its performance does not diminish in the presence of useless or redundant knowledge.
In order to check this, we re-ran our experiments in this more difficult setting of perturbed user input.
We added useless and redundant causal knowledge,
by including an additional factor in the causal DFA product.
More precisely, we called \texttt{computeProduct} an additional time with a redundant causal DFA with no rejecting sink states.
This factor injects no new useful causal information,
but increases the state space size $5$ times.
The results we obtained were fully in line with the ones presented in our case studies,
i.e. the improved convergence rate enabled by our method was retained.
In conclusion, our algorithm can handle useless or redundant knowledge,
and its performance is not diminished by it.
We showcase these results in Figure~\ref{fig:results-spurious}.

We make two additional observations relating to state space size.

\begin{enumerate}
    \item PRMs themselves already provide an immense reduction in state space size, because the policy no longer has to consider the whole history. Instead, the PRM state acts as a form of finite memory in the MDP.
    \item Our method is based on removing redundant paths from the PRM (those where exploration is not necessary).
\end{enumerate}

We want to further drive home the point that while using PRMs and incorporating causal information as we propose does contribute to an increased state space,
the resulting performance benefits more than outweigh the costs
(and, as our additional analysis shows, some costs are fully avoided).

\section{Conclusion and Further Work}

The method proposed in this paper addresses the difficult problem of accounting for knowledge about temporal causality in the RL environment.
We have shown that an expressive and concise description of temporal and causal relations
in the form of a Temporal Logic-based Causal Diagram
can be integrated into the reward function formalism.
Furthermore, we have shown how the added information about temporal and causal relations can be leveraged to expedite learning
without changing the optimal policy.

While our method performs well in case studies,
we are convinced that this work can be continued to integrate knowledge about causality even more tightly into the reward function.
In particular, look-ahead information contained in state-values of the product PRM may be further utilized by methods like reward shaping.
We are also interested in further exploring the interplay between probabilistic outcomes and causal information.


\begin{credits}
  \subsubsection{\ackname} This work was supported in part by the National Science Foundation (NSF) under Grants CNS 2304863 and CNS 2339774, and in part by the Office of Naval Research (ONR) under Grant N00014-23-1-2505. Additionally, this work has been financially supported by the Research Center Trustworthy Data Science and Security~\footnote{\url{https://rc-trust.ai}}, one of the Research Alliance centers within the UA Ruhr~\footnote{\url{https://uaruhr.de}}.

\end{credits}
%
%
%
\bibliographystyle{splncs04}
\bibliography{references}
%

\clearpage
\appendix

Section~\ref{sec:product} contains the algorithm for the \texttt{computeProduct} function used in Algorithm~\ref{alg:causal}.
Section~\ref{sec:theory} contains the proof of Theorem~\ref{thm:convergence}.
In Section~\ref{sec:4-door-case-study}, the PRM related to the four-door task is depicted.

\section{Computing the PRM and causal DFA product}
\label{sec:product}

Algorithm~\ref{alg:product} computes the (intermediate) product $\prm{B_1}$ ($\prm{B_2}$) of a PRM $\prm{A}$ and causal DFA $\dfa{C}$.
This function is called in Line~\ref{line:product} of Algorithm~\ref{alg:causal} in Section~\ref{sec:method}.
Note that this function does not perform value iteration,
this is done in Line~\ref{line:value-iteration} of Algorithm~\ref{alg:causal}.
The set of terminal states in the intermediate product $\prm{B_1}$ is a subset of the set of terminal states in the final product $\prm{B}$.

The $\texttt{transitions}$ dictionary used in Algorithm~\ref{alg:product}
represents the transition and output functions of $\prm{B_1}$.
It maps triplets
$((\prmCommonState, \dfaCommonState), \propInput, (\prmCommonState', \dfaCommonState'))$
to to pairs $(p, r)$,
where $p = \prmTransitions^{\prm{B_1}}((\prmCommonState, \dfaCommonState), \propInput, (\prmCommonState', \dfaCommonState'))$, and
$r = \prmOutput^{\prm{B_1}}((\prmCommonState, \dfaCommonState), \propInput, (\prmCommonState', \dfaCommonState'))$.

\begin{algorithm}
  \caption{computeProduct($\prm{A}$, $\dfa{C}$)}
  \SetAlgoLined

  \SetKwInOut{Input}{Input}
  \SetKwInOut{Output}{Output}
  \label{alg:product}
  \setcounter{AlgoLine}{0}
  \Input{PRM $\prm{A}$, minimal causal DFA $\dfa{C}$ with rejecting sink states $\dfaRejectingSinks$}

  $\text{appears} \gets \prm{A}.\text{appears} \cup \dfa{C}.\text{appears}$ \tcp*[r]{M.appears is the set of relevant atomic propositions in M.}

  $\text{pairToSelfStateMap} \gets \{\}$ \tcp*[r]{Dictionary mapping pairs of states in $\prm{A}$ and $\dfa{C}$ to a single state in $\prm{B}'$}

  $\text{selfToPairStateMap} \gets \{\}$

  $\text{nonTerminalStates} \gets \emptyset$ \tcp*[r]{Contains non-terminal states of the intermediate product PRM.}

  $\text{terminalStates} \gets \emptyset$ \tcp*[r]{Contains terminal states of the intermediate product PRM.}

  $\text{transitions} \gets \{\}$ \tcp*[r]{Dictionary representation of $\prmTransitions$ and $\prmOutput$ functions of $\prm{B}'$.}

  $\text{stateCounter} \gets 0$

  \For{$\prmCommonState$ in $\prm{A}.\text{states}$}{
    \For{$\dfaCommonState$ in $\dfa{C}.\text{states}$}{
      $\text{pairToSelfStateMap}[(\prmCommonState, \dfaCommonState)] \gets \text{stateCounter}$

      $\text{selfToPairStateMap}[\text{stateCounter}] \gets (\prmCommonState, \dfaCommonState)$

      \If{$\prmCommonState$ in $\prm{A}.\text{terminalStates}$}{
        $\text{terminalStates} \gets \{\text{stateCounter}\} \cup \text{terminalStates}$
      }
      \Else{
        $\text{nonTerminalStates} \gets \{\text{stateCounter}\} \cup \text{nonTerminalStates}$
      }
      $\text{stateCounter} \gets \text{stateCounter} + 1$
    }
  }

  \For{$\prmCommonState$ in $\prm{A}.\text{nonTerminalStates}$}{
    \For{$\dfaCommonState$ in $\dfa{C}.\text{states}$}{
      $\text{state} \gets \text{pairToSelfStateMap}[(\prmCommonState, \dfaCommonState)]$

      $\text{transitions}[\text{state}] \gets \{\}$

      \For{$\text{inputSymbol}$ in \Call{generateInputs}{\text{appears}}}{
        $\text{inputSymbolPrm} \gets \prm{A}.\text{appears} \cap \text{inputSymbol}$

        $\text{inputSymbolDfa} \gets \dfa{C}.\text{appears} \cap \text{inputSymbol}$

        $\text{transitions}[\text{state}][\text{inputSymbol}] \gets \{\}$

        $\text{nextDfaState} \gets \dfa{C}.\text{transitions}[\dfaCommonState][\text{inputSymbolDfa}]$

        \For{$\text{nextPrmState}$ in $\prm{A}.\text{transitions}[\prmCommonState][\text{inputSymbolPrm}]$}{
          $\text{nextState} \gets \text{pairToSelfStateMap}[(\text{nextPrmState}, \text{nextDfaState})]$

          $\text{probability}, \text{reward} \gets \prm{A}.\text{transitions}[\prmCommonState][\text{inputSymbolPrm}][\text{nextPrmState}]$

          \If{$\text{nextDfaState}$ in $\dfaRejectingSinks$}{
            $\text{reward} \gets m$
          }
          $\text{transitions}[\text{state}][\text{inputSymbol}][\text{nextState}] \gets (\text{probability}, \text{reward})$
        }
      }
    }
  }

  \Return PRM($\text{transitions}$, $\text{appears}$, $\text{terminalStates}$)
\end{algorithm}

\section{Formal Statements and Proofs}
\label{sec:theory}

Throughout this Section, let $\mdp{M} = (\mdpStates, \mdpActions, \mdpReward, \mdpDynamics, \labelingFunction, \mdpDiscount)$ be a labeled MDP,
and let $\prm{A} = (\prmStates, \prmInitState, \prmInputAlphabet, \prmOutputAlphabet, \prmTransitions, \prmOutput, \prmTerminals)$ be a PRM that encodes $\mdpReward$ in $\mdp{M}$. A state $\mdpCommonState \in \mdpStates$ is reachable if there exists an attainable trajectory $\mdpTrajectory{n}$ in $\mdp{M}$ such that $\mdpCommonState = \mdpCommonState_i$ for some $i = 0, \ldots, n$.


\begin{mylemma}[Transformation 1]
    \label{lemma:t1}
    Let $\prm{A} = (\prmStates, \prmInitState, \prmInputAlphabet, \prmOutputAlphabet, \prmTransitions, \prmOutput, \prmTerminals)$ be a PRM,
    $\dfa{C}$ a DFA with transition function $\dfaTransitions$,
    and $\prm{A} \times \dfa{C}$ their parallel composition with output function
    $\prmOutput^{\prm{A} \times \dfa{C}}((\prmCommonState, \dfaCommonState), \propInput, (\prmCommonState', \dfaCommonState')) = \prmOutput(\prmCommonState, \propInput, \prmCommonState')$.
    Let $\pi^\star(\mdpCommonState, (\prmCommonState, \dfaCommonState))$ be an optimal policy in the product MDP $\mdp{M} \times (\prm{A} \times \dfa{C})$.
    Then $\pi(\mdpCommonState, \prmCommonState) = \pi^\star(\mdpCommonState, (\prmCommonState, F(\prmCommonState)))$ is an optimal policy
    in the product MDP $\mdp{M} \times \prm{A}$,
    where $F_{\mdpCommonState} : \prmStates \to \dfaStates$ is a tiebreaking reachability function that maps $\prmCommonState$ to a fixed but arbitrary $\dfaCommonState$ such that $(\mdpCommonState, \prmCommonState, \dfaCommonState)$ is reachable in $\mdp{M} \times (\prm{A} \times \dfa{C})$.
\end{mylemma}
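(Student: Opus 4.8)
The plan is to exploit that, in $\mdp{M} \times (\prm{A} \times \dfa{C})$, the DFA coordinate is a pure ``passenger'': along every run it is updated deterministically via $\dfaTransitions$ from the labels produced by the MDP, but it never influences which actions are available, the MDP transition probabilities $\mdpDynamics$, the PRM transition probabilities $\prmTransitions$, or the emitted rewards — the last point precisely because the output function of $\prm{A} \times \dfa{C}$ is $\prmOutput^{\prm{A} \times \dfa{C}}((\prmCommonState, \dfaCommonState), \propInput, (\prmCommonState', \dfaCommonState')) = \prmOutput(\prmCommonState, \propInput, \prmCommonState')$, which ignores $\dfaCommonState, \dfaCommonState'$. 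From this I want to conclude that optimal values in the larger product do not depend on the DFA coordinate and agree with those of $\mdp{M} \times \prm{A}$, after which the statement about policies is immediate.

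Concretely, first I would define the candidate value function $\tilde{\stateValueFunction}(\mdpCommonState, (\prmCommonState, \dfaCommonState)) := \optimalStateValueFunction_{\mdp{M} \times \prm{A}}(\mdpCommonState, \prmCommonState)$ on $\mdp{M} \times (\prm{A} \times \dfa{C})$ and check that it solves that product's Bellman optimality equation. Expanding the Bellman optimality operator at $(\mdpCommonState, (\prmCommonState, \dfaCommonState))$, each successor contributes $\tilde{\stateValueFunction}(\mdpCommonState', (\prmCommonState', \dfaTransitions(\dfaCommonState, \propInput))) = \optimalStateValueFunction_{\mdp{M} \times \prm{A}}(\mdpCommonState', \prmCommonState')$ with weight $\mdpDynamics(\mdpCommonState' \mid \mdpCommonState, \mdpCommonAction)\, \prmTransitions(\prmCommonState, \propInput, \prmCommonState')$ for $\propInput = \labelingFunction(\mdpCommonState, \mdpCommonAction, \mdpCommonState')$, with immediate reward $\prmOutput(\prmCommonState, \propInput, \prmCommonState')$; the DFA coordinate has dropped out and what remains is verbatim the Bellman optimality operator of $\mdp{M} \times \prm{A}$ applied to $\optimalStateValueFunction_{\mdp{M} \times \prm{A}}$, hence equal to $\optimalStateValueFunction_{\mdp{M} \times \prm{A}}(\mdpCommonState, \prmCommonState) = \tilde{\stateValueFunction}(\mdpCommonState, (\prmCommonState, \dfaCommonState))$. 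Since $\mdpDiscount < 1$ the Bellman optimality operator is a contraction with a unique fixed point, so $\optimalStateValueFunction_{\mdp{M} \times (\prm{A} \times \dfa{C})}(\mdpCommonState, (\prmCommonState, \dfaCommonState)) = \optimalStateValueFunction_{\mdp{M} \times \prm{A}}(\mdpCommonState, \prmCommonState)$ for every $\dfaCommonState$; plugging this into the definition of $\optimalQFunction$ in terms of $\optimalStateValueFunction$ gives the same identity for the optimal action-value functions, $\optimalQFunction_{\mdp{M} \times (\prm{A} \times \dfa{C})}(\mdpCommonState, (\prmCommonState, \dfaCommonState), \mdpCommonAction) = \optimalQFunction_{\mdp{M} \times \prm{A}}(\mdpCommonState, \prmCommonState, \mdpCommonAction)$, with no dependence on $\dfaCommonState$.

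To finish, I would use the standard characterization that a (stationary) policy of a discounted MDP is optimal iff it is greedy with respect to the optimal action-value function at every state. Applying this to $\pi^\star$ in $\mdp{M} \times (\prm{A} \times \dfa{C})$, the action $\pi^\star(\mdpCommonState, (\prmCommonState, \dfaCommonState))$ maximizes $\optimalQFunction_{\mdp{M} \times (\prm{A} \times \dfa{C})}(\mdpCommonState, (\prmCommonState, \dfaCommonState), \cdot)$, which by the previous step is the same as maximizing $\optimalQFunction_{\mdp{M} \times \prm{A}}(\mdpCommonState, \prmCommonState, \cdot)$ regardless of the DFA coordinate; hence $\pi(\mdpCommonState, \prmCommonState) = \pi^\star(\mdpCommonState, (\prmCommonState, F_{\mdpCommonState}(\prmCommonState)))$ is greedy with respect to $\optimalQFunction_{\mdp{M} \times \prm{A}}$ and therefore optimal in $\mdp{M} \times \prm{A}$. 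A short remark handles well-definedness of $F_{\mdpCommonState}$: if $(\mdpCommonState, \prmCommonState)$ is reachable in $\mdp{M} \times \prm{A}$, a witnessing trajectory read through $\labelingFunction$ drives $\dfa{C}$ to a unique state $\dfaCommonState$, so $(\mdpCommonState, (\prmCommonState, \dfaCommonState))$ is reachable in $\mdp{M} \times (\prm{A} \times \dfa{C})$ and $F_{\mdpCommonState}(\prmCommonState)$ can be set to it, while on unreachable $(\mdpCommonState, \prmCommonState)$ the choice is immaterial to optimality. The only genuinely non-routine step is the passenger argument in the second paragraph; I expect the cleanest justification to be exactly this ``discard the DFA coordinate, verify the Bellman fixed point, invoke uniqueness'' route, since a direct trajectory-coupling argument is awkward here — a stationary policy upstairs may read the DFA coordinate, which downstairs is a function of the history rather than of the current state, so the couplings only go cleanly in one direction.
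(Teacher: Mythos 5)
Your proposal is correct and follows essentially the same route as the paper's proof: both verify that the optimal value/Q-function of $\mdp{M} \times \prm{A}$, lifted by ignoring the DFA coordinate, satisfies the Bellman optimality equation of $\mdp{M} \times (\prm{A} \times \dfa{C})$ (using determinism of $\dfaTransitions$ and the fact that $\prmOutput^{\prm{A} \times \dfa{C}}$ discards the DFA component), and then recover the downstairs policy via $F_{\mdpCommonState}$. Your explicit appeal to contraction/uniqueness of the fixed point and to the greedy-policy characterization is a slightly tidier justification of steps the paper leaves implicit, but it is not a different argument.
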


\begin{proof}
    Since Q-learning with an $\epsilon$-greedy exploration strategy visits every reachable state infinitely often in the limit,
    we can learn a mapping $F_s$ along with the optimal policy for
    $\mdp{M} \times (\prm{A} \times \dfa{C})$,
    by setting $F_{\mdpCommonState}(\prmCommonState) = \dfaCommonState$ once we reach $(\mdpCommonState, \prmCommonState, \dfaCommonState)$ for the first time, and we have not yet observed $(\mdpCommonState, \prmCommonState, \dfaCommonState')$ for any $\dfaCommonState' \in \dfaStates$.
    Then, from the fact that

    \begin{enumerate}
        \item Q-learning learns the optimal Q-function in all reachable states, and
        \item the optimal Q-function $\optimalQFunction(\mdpCommonState, \prmCommonState, \dfaCommonState, \mdpCommonAction)$ for $\mdp{M} \times (\prm{A} \times \dfa{C})$ is the same as $\optimalQFunction(\mdpCommonState, \prmCommonState, \mdpCommonAction)$ for $\mdp{M} \times \prm{A}$ once we project out the DFA component $\dfaCommonState \in \dfaStates$,
    \end{enumerate}

    it will follow that we can use $F_{\mdpCommonState}$ to construct the optimal policy in $\mdp{M} \times \prm{A}$.

    Let $\optimalQFunction(\mdpCommonState, \prmCommonState, \mdpCommonAction)$ be the optimal Q-function for $\mdp{M} \times \prm{A}$,
    and $\optimalQFunction(\mdpCommonState, \prmCommonState, \dfaCommonState, \mdpCommonAction)$ the optimal Q-function for $\mdp{M} \times (\prm{A} \times \dfa{C})$.
    We will proceed by showing that
    $\optimalQFunction(\mdpCommonState, \prmCommonState, \dfaCommonState_1, \mdpCommonAction) =
        \optimalQFunction(\mdpCommonState, \prmCommonState, \dfaCommonState_2, \mdpCommonAction)$ for all $\dfaCommonState_1, \dfaCommonState_2 \in \dfaStates$.

    Let $\tilde{\qFunction}(\mdpCommonState, \prmCommonState, \dfaCommonState, \mdpCommonAction) = \optimalQFunction(\mdpCommonState, \prmCommonState, \mdpCommonAction)$. We will show that $\tilde{\qFunction}$ is a solution to the state value Bellman optimality equation for $\mdp{M} \times (\prm{A} \times \dfa{C})$, given in Equation~\ref{eqn:bellman-extended}. In the following formulas, let $\propInput = \labelingFunction(\mdpCommonState, \mdpCommonAction, \mdpCommonState')$.

    \begin{equation}
        \begin{aligned}
            \optimalQFunction & (\mdpCommonState, \prmCommonState, \dfaCommonState, \mdpCommonAction) =                                                                                                                                                                                         \\
                              & = \sum_{\substack{\mdpCommonState' \in \mdpStates                                                                                                                                                                                                               \\ \prmCommonState' \in \prmStates \\ \dfaCommonState' \in \dfaStates}}
            \mdpDynamics(\mdpCommonState', \prmCommonState', \dfaCommonState' \mid \mdpCommonState, \prmCommonState, \dfaCommonState, \mdpCommonAction)
            \left(\prmOutput^{\prm{A} \times \dfa{C}}((\prmCommonState, \dfaCommonState), \propInput, (\prmCommonState', \dfaCommonState')) + \gamma \max_{\mdpCommonAction' \in \mdpActions} \optimalQFunction(\mdpCommonState', \prmCommonState', \dfaCommonState', \mdpCommonAction')\right) \\
                              & = \sum_{\substack{\mdpCommonState' \in \mdpStates                                                                                                                                                                                                               \\ \prmCommonState' \in \prmStates }}
            \mdpDynamics(\mdpCommonState', \prmCommonState' \mid \mdpCommonState, \prmCommonState, \mdpCommonAction)
            \left(\prmOutput^{\prm{A}}(\prmCommonState, \propInput, \prmCommonState') + \gamma \max_{\mdpCommonAction' \in \mdpActions} \optimalQFunction(\mdpCommonState', \prmCommonState', \dfaTransitions(\dfaCommonState, \propInput), \mdpCommonAction')\right)
        \end{aligned}
        \label{eqn:bellman-extended}
    \end{equation}

    The second equality holds because

    \begin{enumerate}
        \item $\mdpDynamics(\mdpCommonState', \prmCommonState', \dfaCommonState' \mid \mdpCommonState, \prmCommonState, \dfaCommonState, \mdpCommonAction) = 0$ for $\dfaTransitions(\dfaCommonState, \propInput) \neq \dfaCommonState'$,
        \item $\mdpDynamics(\mdpCommonState', \prmCommonState', \dfaCommonState' \mid \mdpCommonState, \prmCommonState, \dfaCommonState, \mdpCommonAction) = \mdpDynamics(\mdpCommonState', \prmCommonState' \mid \mdpCommonState, \prmCommonState, \mdpCommonAction)$ for $\dfaTransitions(\dfaCommonState, \propInput) = \dfaCommonState'$, and
        \item $\prmOutput^{\prm{A} \times \dfa{C}}((\prmCommonState, \dfaCommonState), \propInput, (\prmCommonState', \dfaCommonState')) = \prmOutput^{\prm{A}}(\prmCommonState, \propInput, \prmCommonState')$.
    \end{enumerate}

    by definition of $\prm{A} \times \dfa{C}$.
    Equation~\ref{eqn:bellman-extended} shows that the Bellman optimality equation for $\optimalQFunction(\mdpCommonState, \prmCommonState, \dfaCommonState, \mdpCommonAction)$  reduces to the Bellman optimality equation for $\optimalQFunction(\mdpCommonState, \prmCommonState, \mdpCommonAction)$.
    More precisely, the parameters for the system of nonlinear equations given in Equation~\ref{eqn:bellman-extended} are the same as those in the system for $\optimalQFunction(\mdpCommonState, \prmCommonState, \mdpCommonAction)$,
    except that each individual equation is repeated $|\dfaStates|$ times (once for every DFA state).
    Therefore, we have $\tilde{\qFunction}(\mdpCommonState, \prmCommonState, \dfaCommonState, \mdpCommonAction) = \optimalQFunction(\mdpCommonState, \prmCommonState, \mdpCommonAction)$ as a solution.

    Now all that is left is the fact that Q-learning in $\mdp{M} \times (\prm{A} \times \dfa{C})$ will converge to the optimal Q-function that is independent of the $\dfaCommonState \in \dfaStates$ component,
    and that Q-learning will converge to the same Q-function in $\mdp{M} \times \prm{A}$.
    Values in unreachable states will remain unaffected by learning updates,
    and will not affect the return from the optimal policy.


\end{proof}

\begin{definition}[Unreachable PRM state]
    Let $\prm{A} = (\prmStates, \prmInitState, \prmInputAlphabet, \prmOutputAlphabet, \prmTransitions, \prmOutput, \prmTerminals)$ be a PRM.
    A state $\prmCommonState \in \prmStates$ is $\mdp{M}$-unreachable if for every input sequence $\lambda$
    s.t. $\prm{A} \xrightarrow{\lambda} \prmCommonState$
    ($\prm{A}$ transitions into $\prmCommonState$ upon reading $\lambda$) we have that every trajectory $\mdpTrajectory{n}$ such that $\labelingFunction(\mdpTrajectory{n}) = \lambda$ is unattainable in $\mdp{M}$ (has probability $0$ according to the transition function $\mdpDynamics$ of $\mdp{M}$).
    \label{def:unreachable-prm-state}
\end{definition}

\begin{mylemma}[Transformation 2]
    Let $\alpha \in \reals$ be an arbitrary real.
    Let $\prm{A} = (\prmStates, \prmInitState, \prmInputAlphabet, \prmOutputAlphabet,\allowbreak \prmTransitions, \prmOutput, \prmTerminals)$ be a PRM,
    and let $V \subset \prmStates$ be a set of $\mdp{M}$-unreachable PRM states of $\prm{A}$.
    Let $\prm{A'} = \prm{A} / V \rightarrow \alpha = (\prmStates, \prmInitState, \prmInputAlphabet, \prmOutputAlphabet \cup \mathset{\alpha}, \prmTransitions, \prmOutput^{\prm{A'}}, \prmTerminals)$ be a PRM obtained by setting the output of every transition into an unreachable state $\prmCommonState \in V$ to $\alpha$.
    In other words, $\prmOutput^{\prm{A'}}(\prmCommonState, \prmCommonState') = \alpha$ for all $\prmCommonState \in \prmStates$ and $\prmCommonState \in V$.
    Let $\pi^\star(\mdpCommonState, \prmCommonState)$ be an optimal policy in the product MDP $\mdp{M} \times \prm{A'}$.
    Then $\pi^\star(\mdpCommonState, \prmCommonState)$ is also an optimal policy in the product MDP $\mdp{M} \times \prm{A}$.
    \label{lemma:t2}
\end{mylemma}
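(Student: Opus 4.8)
The plan is to observe that $\mdp{M}\times\prm{A}$ and $\mdp{M}\times\prm{A'}$ are identical as MDPs except for the rewards assigned to transitions that \emph{enter} a state of $V$, and that no such transition is ever taken along a positive‑probability trajectory. Concretely, the two products share the state space $\mdpStates\times\prmStates$, the same initial state, and the same transition probabilities (since $\prm{A}$ and $\prm{A'}$ have the same transition function $\prmTransitions$); only the reward function is altered, and only on transitions $((\mdpCommonState,\prmCommonState),\mdpCommonAction,(\mdpCommonState',\prmCommonState'))$ with $\prmCommonState'\in V$.

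The key step is to show that every product state $(\mdpCommonState,\prmCommonState)$ with $\prmCommonState\in V$ is unreachable in $\mdp{M}\times\prm{A}$ (hence also in $\mdp{M}\times\prm{A'}$). I would argue by contradiction: a positive‑probability trajectory of the product reaching $(\mdpCommonState,\prmCommonState)$ projects to an attainable trajectory $\mdpTrajectory{n}$ in $\mdp{M}$ with $\mdpCommonState_n=\mdpCommonState$, whose label sequence $\lambda=\labelingFunction(\mdpTrajectory{n})$ admits a (positive‑probability) run of $\prm{A}$ ending in $\prmCommonState$, i.e.\ $\prm{A}\xrightarrow{\lambda}\prmCommonState$; but Definition~\ref{def:unreachable-prm-state} then forces every trajectory labeled $\lambda$ to be unattainable, contradicting attainability of $\mdpTrajectory{n}$. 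From this it follows that no transition into a $V$‑state is ever taken with positive probability from the initial state (otherwise its target would be reachable), so the reachable product states form a sub‑MDP closed under the dynamics on which the reward functions of $\mdp{M}\times\prm{A}$ and $\mdp{M}\times\prm{A'}$ coincide.

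Consequently, for any policy $\pi$ the trajectory distribution from the initial state is the same in both products (same dynamics, same policy), and the per‑step rewards agree along every trajectory in the support, so the expected discounted return from the initial state — indeed from every reachable state — is identical in $\mdp{M}\times\prm{A}$ and $\mdp{M}\times\prm{A'}$. (Equivalently, one can run the Bellman‑equation reduction used in the proof of Lemma~\ref{lemma:t1}: the Bellman optimality systems of the two products agree on the closed sub‑MDP of reachable states, hence have the same solution and the same greedy actions there.) Therefore a policy $\pi^\star$ optimal in $\mdp{M}\times\prm{A'}$ attains the optimal value in $\mdp{M}\times\prm{A}$ as well, and its behavior on unreachable states is irrelevant to the return, so $\pi^\star$ is optimal in $\mdp{M}\times\prm{A}$.

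I expect the main obstacle to be making the unreachability argument fully rigorous — handling the probabilistic PRM transitions so that "$\prm{A}\xrightarrow{\lambda}\prmCommonState$" is read as "there is a positive‑probability run", and linking "reachable in the product MDP" cleanly to "attainable MDP trajectory with a compatible labeling" through the product construction. Once that is pinned down, the rest is the routine observation that changing rewards off the reachable part of the state space alters no return and hence no optimal policy.
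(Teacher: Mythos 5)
Your proposal is correct and follows essentially the same route as the paper's own proof: both reduce the claim to the observation that the two products share state space, dynamics, and initial distribution, that transitions into $V$-states occur with probability zero from any reachable product state (which is where Definition~\ref{def:unreachable-prm-state} is invoked), and that therefore the Bellman systems — and hence the values and optimal policies — coincide on the reachable sub-MDP for an arbitrary policy. Your explicit contradiction argument for why product states $(\mdpCommonState,\prmCommonState)$ with $\prmCommonState\in V$ are unreachable just spells out the step the paper states parenthetically ("otherwise $\prmCommonState'$ would be $\mdp{M}$-reachable if $\prmCommonState$ was $\mdp{M}$-reachable"), so there is no substantive difference.
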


\begin{proof}
    MDPs $\mdp{M} \times \prm{A}$ and $\mdp{M} \times \prm{A'}$ share the same state space $\mdpStates \times \prmStates$,
    probabilistic transition function $\mdpDynamics$,
    and initial state distribution.
    They may differ only in their (Markovian) reward function,
    specifically, on transitions into unreachable states
    $(\mdpCommonState, \prmCommonState) \in \mdpStates \times \prmStates$ for all $\mdpCommonState \in \mdpStates$ and $\prmCommonState \in V \subset \prmStates$.
    By definition~\ref{def:unreachable-prm-state},
    trajectories that induce a PRM transition into an $\mdp{M}$-unreachable state in $\prm{A}$ are unattainable.

    This proof proceeds similarly to proof of Lemma~\ref{lemma:t1},
    except it is even easier because we can work with a system of linear (not optimality) equations.
    The statement of the lemma concerning optimality will follow from the general reduction, i.e. the lemma holds for an arbitrary policy $\pi$ not just the optimal one.
    In Equation~\ref{eqn:bellman-t2} we set out the system of Bellman equations in $\mdp{M} \times \prm{A'}$, and show that it reduces to the one for $\mdp{M} \times \prm{A}$.

    \begin{equation}
        \begin{aligned}
            \qFunction^{\mdp{M} \times \prm{A'}}_{\policy} & (\mdpCommonState, \prmCommonState, \mdpCommonAction)
            =                                                                                                            \\
                                                           & =\sum_{\substack{\mdpCommonState' \in \mdpStates            \\ \prmCommonState' \in \prmStates}}
            \mdpDynamics(\mdpCommonState', \prmCommonState' \mid \mdpCommonState, \prmCommonState, \mdpCommonAction)
            \left(\prmOutput^{\prm{A'}}(\prmCommonState, \propInput,\prmCommonState') + \gamma \sum_{\mdpCommonAction' \in \mdpActions}
            \policy(\mdpCommonAction' \mid \mdpCommonState')
            \qFunction^{\mdp{M} \times \prm{A'}}_{\policy}(\mdpCommonState', \prmCommonState', \mdpCommonAction')\right) \\
                                                           & = \sum_{\substack{\mdpCommonState' \in \mdpStates           \\ \prmCommonState' \in \prmStates \setminus V}}
            \mdpDynamics(\mdpCommonState', \prmCommonState' \mid \mdpCommonState, \prmCommonState, \mdpCommonAction)
            \left(\prmOutput^{\prm{A}}(\prmCommonState, \propInput,\prmCommonState') + \gamma \sum_{\mdpCommonAction' \in \mdpActions}
            \policy(\mdpCommonAction' \mid \mdpCommonState')
            \qFunction^{\mdp{M} \times \prm{A'}}_{\policy}(\mdpCommonState', \prmCommonState', \mdpCommonAction')\right) \\
                                                           & + \sum_{\substack{\mdpCommonState' \in \mdpStates           \\ \prmCommonState' \in V}}
            \mdpDynamics(\mdpCommonState', \prmCommonState' \mid \mdpCommonState, \prmCommonState, \mdpCommonAction)
            \left(\prmOutput^{\prm{A'}}(\prmCommonState, \propInput,\prmCommonState') + \gamma \sum_{\mdpCommonAction' \in \mdpActions}
            \policy(\mdpCommonAction' \mid \mdpCommonState')
            \qFunction^{\mdp{M} \times \prm{A'}}_{\policy}(\mdpCommonState', \prmCommonState', \mdpCommonAction')\right)
        \end{aligned}
        \label{eqn:bellman-t2}
    \end{equation}

    If $\prmCommonState$ is an $\mdp{M}$-reachable state, the second sum vanishes because $\mdpDynamics^{\mdp{M} \times \prm{A}}(\mdpCommonState', \prmCommonState' \mid \mdpCommonState, \prmCommonState, \mdpCommonAction) = \mdpDynamics^{\mdp{M}}(\mdpCommonState', \mid \mdpCommonState, \mdpCommonAction) \cdot
        \prmTransitions^{\prm{A}}(\prmCommonState, \labelingFunction(\mdpCommonState, \mdpCommonAction, \mdpCommonState'), \prmCommonState') = 0$ for
    $\prmCommonState' \in V$
    (otherwise, $\prmCommonState'$ would be $\mdp{M}$-reachable if $\prmCommonState$ was $\mdp{M}$ reachable).
    Therefore, solutions in rows corresponding to $\mdp{M}$-reachable states are independent of rows corresponding to $\mdp{M}$-unreachable states,
    and the equations are the same as in the system for
    $\qFunction^{\mdp{M} \times \prm{A}}$, where the second sum also vanishes.
    Therefore, as Q-learning explores all reachable states infinitely often, and the reachable states in both product MDPs are the same and share the same Q-function, Q-learning will find the same optimal policy in both MDPs.
    The values of Q-functions corresponding to unreachable states are of no consequence.

\end{proof}


Definition~\ref{def:dependent-unreachable} captures the structure of the set of $\mdp{M}$-unreachable states in a PRM $\prm{A}$ induced by rejecting sinks states of a causal DFA.
This property of a set of unreachable states $V$ models the deterministic transition function of a causal DFA.

\begin{definition}[Dependent Set of Unreachable PRM States]
    \label{def:dependent-unreachable}
    Let $A = (\prmStates, \prmInitState, \prmInputAlphabet, \prmOutputAlphabet, \prmTransitions,\allowbreak \prmOutput, \prmTerminals)$ be a PRM, and
    $V$ a subset of $\mdp{M}$-unreachable states in $A$.
    We say that $V$ is a dependent set of $\mdp{M}$-unreachable states in $\prm{A}$ if it is a set of $\mdp{M}$-unreachable states and the following property holds for all labels $\propInput \in \prmInputAlphabet$
    and states $\prmCommonState \in \prmStates^{\prm{A}}$:
    $(\exists \prmCommonState' \in V)\; \prmTransitions^\prm{A}(\prmCommonState, \propInput, \prmCommonState') > 0 \implies (\forall \prmCommonState'' \not\in V)\; \prmTransitions^\prm{A}(\prmCommonState, \propInput, \prmCommonState'') = 0$.
\end{definition}

Intuitively, a set of $\mdp{M}$-unreachable states $V$ is dependent if it is not possible to transition into both $V$ and $U \setminus V$ from any $\prmCommonState \in \prmStates$.

\begin{mylemma}[Transformation 3]
    \label{lemma:t3}
    Let $A = (\prmStates, \prmInitState, \prmInputAlphabet, \prmOutputAlphabet, \prmTransitions, \prmOutput, \prmTerminals)$ be a PRM,
    $V$ a dependent set of $\mdp{M}$-unreachable states in $A$,
    and $m = -1 - \max_{r \in \prmOutputAlphabet^\prm{A}}{|r|} - \max_{\prmCommonState \in \prmStates^\prm{A}}{\optimalStateValueFunction(\prmCommonState)}$.
    Let $\prm{B_1} = A / V \rightarrow m$ be a PRM
    that mirrors the output of $A$,
    (except on transitions into unreachable states in $V$ where the output is $m$),
    and $\prm{B_2} = (-A) / V \rightarrow m$
    a PRM that negates the output of $A$
    (except on transitions into unreachable states in $V$ where the output is $m$).
    Let $\prmCommonState^0 \in \prmStates$ be a state in $\prm{A}$
    such that $\optimalStateValueFunction_{\prm{B_1}}(\prmCommonState) = \optimalStateValueFunction_{\prm{B_2}}(\prmCommonState) = 0$.
    Let $\prm{B} = (\prmStates, \prmInitState, \prmInputAlphabet, \prmOutputAlphabet, \prmTransitions, \prmOutput, \prmTerminals \cup \mathset{\prmCommonState^0})$ be a PRM obtained by adding $\prmCommonState^0$ to the set of terminal states in $\prm{A}$.
    Let $\pi^\star(\mdpCommonState, \prmCommonState)$ be an optimal policy in the product MDP $\mdp{M} \times \prm{B}$.
    Then $\pi^\star(\mdpCommonState, \prmCommonState)$ is also an optimal policy in the product MDP $\mdp{M} \times \prm{A}$.
\end{mylemma}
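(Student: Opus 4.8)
The plan is to use the two hypotheses $\optimalStateValueFunction_{\prm{B_1}}(\prmCommonState^0)=0$ and $\optimalStateValueFunction_{\prm{B_2}}(\prmCommonState^0)=0$ as, respectively, an upper and a lower bound on the return that can still be accumulated in $\mdp{M}\times\prm{A}$ once the PRM component has reached $\prmCommonState^0$, to conclude from this that the return from $\prmCommonState^0$ is in fact exactly $0$ under \emph{every} policy, and then to argue---exactly as in the proofs of Lemmas~\ref{lemma:t1} and~\ref{lemma:t2}---that turning $\prmCommonState^0$ into a terminal state changes neither the optimal value function nor the optimal policy on the states that matter.

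First I would make the two bounds precise. Since $\mdpDiscount\in(0,1)$ and all outputs are bounded, the Bellman optimality equation~\eqref{eqn:state-value} for a PRM $\prm{B}'$ has a unique solution $\optimalStateValueFunction_{\prm{B}'}$, which is the optimal value function of the ``label MDP'' whose states are the PRM states, whose actions are the labels $\propInput\in\prmInputAlphabet$, whose transitions are $\prmTransitions^{\prm{B}'}$, whose rewards are $\prmOutput^{\prm{B}'}$, and whose discount is $\mdpDiscount$; in particular every (history-dependent, randomized) label policy has value at most $\optimalStateValueFunction_{\prm{B}'}$ from each state. The key structural observation is that an attainable trajectory whose run of $\prm{A}$ passes through a reachable product state $(\mdpCommonState,\prmCommonState^0)$ cannot subsequently drive $\prm{A}$ into $V$: otherwise, concatenating a prefix that reaches $(\mdpCommonState,\prmCommonState^0)$ with such a continuation would yield an attainable trajectory whose labeling sends $\prm{A}$ from $\prmInitState$ into $V$, contradicting $\mdp{M}$-unreachability of $V$ (Definition~\ref{def:unreachable-prm-state}). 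Because $\prm{B_1}=\prm{A}/V\rightarrow m$ and $\prm{B_2}=(-\prm{A})/V\rightarrow m$ agree with $\prmOutput^{\prm{A}}$ and $-\prmOutput^{\prm{A}}$, respectively, on every transition whose target lies outside $V$, following an arbitrary policy $\policy$ in $\mdp{M}\times\prm{A}$ from $(\mdpCommonState,\prmCommonState^0)$ induces, via the labeling function, a label policy whose value with respect to $\prm{B_1}$-rewards equals $\stateValueFunction_{\policy}^{\mdp{M}\times\prm{A}}(\mdpCommonState,\prmCommonState^0)$ and whose value with respect to $\prm{B_2}$-rewards equals $-\stateValueFunction_{\policy}^{\mdp{M}\times\prm{A}}(\mdpCommonState,\prmCommonState^0)$. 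Combined with the hypotheses this yields $\stateValueFunction_{\policy}^{\mdp{M}\times\prm{A}}(\mdpCommonState,\prmCommonState^0)\le\optimalStateValueFunction_{\prm{B_1}}(\prmCommonState^0)=0$ and $-\stateValueFunction_{\policy}^{\mdp{M}\times\prm{A}}(\mdpCommonState,\prmCommonState^0)\le\optimalStateValueFunction_{\prm{B_2}}(\prmCommonState^0)=0$, hence $\stateValueFunction_{\policy}^{\mdp{M}\times\prm{A}}(\mdpCommonState,\prmCommonState^0)=0$ for every policy $\policy$; in particular the optimal value at every reachable $(\mdpCommonState,\prmCommonState^0)$ is $0$, and the actions chosen in and after such a state are inconsequential.

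Next I would compare $\mdp{M}\times\prm{A}$ and $\mdp{M}\times\prm{B}$, which share the state space $\mdpStates\times\prmStates$, the transition function, the initial distribution, and all rewards, and differ only in that the states $(\mdpCommonState,\prmCommonState^0)$ are terminal in $\mdp{M}\times\prm{B}$ (so they carry value $0$ there). By the previous paragraph, the optimal value function of $\mdp{M}\times\prm{A}$ also equals $0$ at every $(\mdpCommonState,\prmCommonState^0)$ reachable in $\mdp{M}\times\prm{B}$, so its restriction to the states reachable in $\mdp{M}\times\prm{B}$ solves the Bellman optimality equations of $\mdp{M}\times\prm{B}$ on those states, and therefore coincides there with the optimal value function of $\mdp{M}\times\prm{B}$ (uniqueness of the fixed point on reachable states, just as in Lemmas~\ref{lemma:t1} and~\ref{lemma:t2}). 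Finally, running an optimal policy $\pi^\star$ of $\mdp{M}\times\prm{B}$ inside $\mdp{M}\times\prm{A}$ produces the same trajectory distribution and the same accumulated reward up to the first visit (if any) to a state $(\mdpCommonState,\prmCommonState^0)$, after which the remaining return is $0$ in $\mdp{M}\times\prm{A}$ as well---matching the $0$ that termination contributes in $\mdp{M}\times\prm{B}$. Hence $\pi^\star$ attains the optimal value of $\mdp{M}\times\prm{A}$ from the initial state and is optimal there, and since the two product MDPs share the same state space, the optimal policy of $(\mdp{M},\prm{A})$ is recovered verbatim.

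The step I expect to be the main obstacle is the two-sided bound of the second paragraph: translating the \emph{optimistic} PRM values $\optimalStateValueFunction_{\prm{B_1}}$ and $\optimalStateValueFunction_{\prm{B_2}}$---which by~\eqref{eqn:state-value} range over \emph{all} label sequences, attainable in $\mdp{M}$ or not---into a bound on the genuinely attainable returns from $(\mdpCommonState,\prmCommonState^0)$ that is tight both from above and from below. Making this rigorous requires spelling out (i) that reachability of $(\mdpCommonState,\prmCommonState^0)$ together with $\mdp{M}$-unreachability of $V$ forces attainable continuations to avoid $V$, and (ii) that along $V$-avoiding runs the outputs of $\prm{B_1}$ and $\prm{B_2}$ are exactly those of $\prm{A}$ and $-\prm{A}$, so that the supremum captured by~\eqref{eqn:state-value} legitimately dominates the attainable return on both sides (here the dependent-set property of $V$, Definition~\ref{def:dependent-unreachable}, keeps the avoid-$V$ behavior well-structured and matches the way value iteration treats the $m$-outputs). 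Once the sandwich ``every attainable return from $(\mdpCommonState,\prmCommonState^0)$ equals $0$'' is established, the remainder of the argument is routine and parallels Transformations~1 and~2.
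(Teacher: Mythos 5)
Your proposal is correct and follows essentially the same route as the paper's proof: sandwich the value of an arbitrary policy at every state $(\tilde{\mdpCommonState},\prmCommonState^0)$ between $-\optimalStateValueFunction_{\prm{B_2}}(\prmCommonState^0)=0$ and $\optimalStateValueFunction_{\prm{B_1}}(\prmCommonState^0)=0$, conclude the value there is $0$ under every policy, and then note that the Bellman systems of $\mdp{M}\times\prm{A}$ and $\mdp{M}\times\prm{B}$ agree on all remaining rows. The only (minor) divergence is in how the lower bound is justified: you argue that attainable continuations from a reachable $(\tilde{\mdpCommonState},\prmCommonState^0)$ must avoid $V$, so the induced label policy is dominated by $\optimalStateValueFunction_{\prm{B_2}}$, whereas the paper works at the PRM level, using the dependent-set property and the magnitude of $m$ to show the Bellman maximum in $\prm{B_1}$ and $\prm{B_2}$ effectively ranges only over $V$-avoiding labels---two ways of making the same point.
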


\begin{proof}
    Let $\pi$ be a policy in $\mdp{M} \times \prm{B}$.
    We will show that $\stateValueFunction^{\mdp{M} \times \prm{A}}_{\pi} = \stateValueFunction^{\mdp{M} \times \prm{B}}_{\pi}$,
    i.e. that an arbitrary policy $\pi$ has the same value in $\mdp{M} \times \prm{A}$.
    From there, it follows that if $\pi$ is optimal in $\mdp{M} \times \prm{B}$,
    then it is optimal in $\mdp{M} \times \prm{A}$.

    To make analysis easier, we will model terminal states as absorbing states (sinks with output $0$).
    For easier notation, $\tilde{\mdpCommonState}$ will refer to states in $\mdpStates^{\mdp{M}}$, and $\mdpCommonState = (\tilde{\mdpCommonState}, \prmCommonState)$ will refer to states in $\mdpStates^{\mdp{M} \times \prm{A}}$.

    We will first show $\stateValueFunction^{\mdp{M} \times \prm{A}}_{\pi}(\mdpCommonState^0) = \stateValueFunction^{\mdp{M} \times \prm{B}}_{\pi}(\mdpCommonState^0)$
    for every state $\mdpCommonState^0 = (\tilde{\mdpCommonState}, u^0)$.
    We have $\stateValueFunction^{\mdp{M} \times \prm{B}}_{\pi}(\mdpCommonState^0) = 0$ because $\mdpCommonState^0$ is an absorbing state in $\mdp{M} \times \prm{B}$.
    We must show that $\stateValueFunction^{\mdp{M} \times \prm{A}}_{\pi}(\mdpCommonState^0) = 0$.
    Intuitively, this is the case because in $\mdp{M} \times \prm{A}$, the expected return when starting in $\mdpCommonState^0 = (\tilde{\mdpCommonState}, \prmCommonState^0)$ and following an arbitrary policy $\pi$ is bounded with $0$ from above and below.
    It is enough to show that
    $\stateValueFunction^{\mdp{M} \times \prm{B_1}}_{\pi}(\mdpCommonState^0) = 0$, because by the proof of Lemma~\ref{lemma:t2} we have
    $\stateValueFunction^{\mdp{M} \times \prm{A}}_{\pi}(\mdpCommonState^0) =
        \stateValueFunction^{\mdp{M} \times \prm{B_1}}_{\pi}(\mdpCommonState^0)$.
    We will show this via Equation~\ref{eqn:bounds}.

    \begin{equation}
        0 = -\optimalStateValueFunction_{\prm{B_2}}(\prmCommonState^0) \leq
        \stateValueFunction^{\mdp{M} \times \prm{B_1}}_{\pi}(\mdpCommonState^0) \leq
        \optimalStateValueFunction_{\prm{B_1}}(\prmCommonState^0) = 0
        \label{eqn:bounds}
    \end{equation}

    Equalities in Equation~\ref{eqn:bounds} hold by assumption.
    The second inequality holds trivially.
    The first inequality holds because the ``pessimistic'' machine $\prm{B_2}$ realizes the minimal value of every state
    (negated to obtain the discounted return in terms of $\prm{B_1}$).
    More precisely, for a given state $\mdpCommonState = (\tilde{\mdpCommonState}, \prmCommonState)$ we have

    \begin{equation}
        \begin{aligned}
            \optimalStateValueFunction_{\prm{B_2}}(\prmCommonState) & = \max_{\propInput \in \prmInputAlphabet}{\sum_{\prmCommonState' \in \prmStates} \prmTransitions^\prm{B_2}(\prmCommonState, \propInput, \prmCommonState') (\prmOutput^\prm{B_2}(\prmCommonState, \propInput, \prmCommonState') + \gamma \optimalStateValueFunction_{\prm{B_2}}(\prmCommonState'))} \\
                                                                    & = \text{($\star$)}                                                                                                                                                                                                                                                                                 \\
                                                                    & = \max_{\substack{\propInput \in \prmInputAlphabet                                                                                                                                                                                                                                                 \\
            \prmTransitions(\prmCommonState, \propInput, \prmCommonState') = 0                                                                                                                                                                                                                                                                                           \\
                    \forall \prmCommonState' \in V}}
            {\sum_{\prmCommonState' \in \prmStates} \prmTransitions^\prm{B_2}(\prmCommonState, \propInput, \prmCommonState') (\prmOutput^\prm{B_2}(\prmCommonState, \propInput, \prmCommonState') + \gamma \optimalStateValueFunction_{\prm{B_2}}(\prmCommonState'))}                                                                                                    \\
                                                                    & = \max_{\substack{\propInput \in \prmInputAlphabet                                                                                                                                                                                                                                                 \\
            \prmTransitions(\prmCommonState, \propInput, \prmCommonState') = 0                                                                                                                                                                                                                                                                                           \\
                    \forall \prmCommonState' \in V}}
            {\sum_{\prmCommonState' \in \prmStates} \prmTransitions^\prm{B_1}(\prmCommonState, \propInput, \prmCommonState') (-\prmOutput^\prm{B_1}(\prmCommonState, \propInput, \prmCommonState') + \gamma \optimalStateValueFunction_{\prm{B_2}}(\prmCommonState'))}
        \end{aligned}
        \label{eqn:value-iteration-1}
    \end{equation}

    and similarly

    \begin{equation}
        \begin{aligned}
            \optimalStateValueFunction_{\prm{B_1}}(\prmCommonState) & = \max_{\propInput \in \prmInputAlphabet}{\sum_{\prmCommonState' \in \prmStates} \prmTransitions^\prm{B_1}(\prmCommonState, \propInput, \prmCommonState') (\prmOutput^\prm{B_1}(\prmCommonState, \propInput, \prmCommonState') + \gamma \optimalStateValueFunction_{\prm{B_2}}(\prmCommonState'))} \\
                                                                    & = \max_{\substack{\propInput \in \prmInputAlphabet                                                                                                                                                                                                                                                 \\
            \prmTransitions(\prmCommonState, \propInput, \prmCommonState') = 0                                                                                                                                                                                                                                                                                           \\
                    \forall \prmCommonState' \in V}}
            {\sum_{\prmCommonState' \in \prmStates} \prmTransitions^\prm{B_1}(\prmCommonState, \propInput, \prmCommonState') (\prmOutput^\prm{B_1}(\prmCommonState, \propInput, \prmCommonState') + \gamma \optimalStateValueFunction_{\prm{B_1}}(\prmCommonState'))}
        \end{aligned}
        \label{eqn:value-iteration-2}
    \end{equation}.

    ($\star$): By assumption, when $\prmTransitions^\prm{B_2}(\prmCommonState, \propInput, \prmCommonState') > 0$ for any $\prmCommonState' \in V$ then $\prmTransitions^\prm{B_2}(\prmCommonState, \propInput, \prmCommonState'') = 0$ for all $\prmCommonState'' \not\in V$.
    Intuitively, if reading input $\propInput$ from state $\prmCommonState$ induces a transition to
    $\prmCommonState' \in V$ with positive probability in $\prm{B_2}$,
    since the transitions of the causal DFA are \emph{not} probabilistic,
    every other state $\prmCommonState''$ such that $\prmTransitions^\prm{B_2}(\prmCommonState, \propInput, \prmCommonState'') > 0$ must also transition into the same rejecting sink state in the causal DFA.
    In that case, $\prmOutput^\prm{B_2}(\prmCommonState, \propInput, \prmCommonState') = m$ for all $\prmCommonState'$, which is lower than any possible immediate reward and resulting state value.
    Therefore, the maximum is not attained for the input $\propInput$.
    Similar reasoning is applied in Equation~\ref{eqn:value-iteration-2}.

    Equation~\ref{eqn:value-iteration-1} and Equation~\ref{eqn:value-iteration-2} show that $-\optimalStateValueFunction_{\prm{B_2}}$ bounds the value of any policy in $\mdp{M} \times \prm{B_1}$ from below, that is $-\optimalStateValueFunction_{\prm{B_2}}(\prmCommonState) \leq \stateValueFunction^{\mdp{M} \times \prm{B_1}}_{\pi}(\mdpCommonState)$.
    The argument is that $\optimalStateValueFunction_{\prm{B_2}}$ is a solution to the Bellman optimality equation in $\prm{B_1}$ with negated rewards.
    In particular, Equation~\ref{eqn:value-iteration-1} and Equation~\ref{eqn:value-iteration-2} show that one can disregard transitions into unreachable states in $V$ when computing state values.
    In that case, $-\optimalStateValueFunction_{\prm{B_2}}$ is the pessimal state value in $\prm{B_1}$.
    Intuitively, one attains a discounted return of $-\optimalStateValueFunction_{\prm{B_2}}(\prmCommonState)$
    when starting in $\prmCommonState$ and minimizing the expected discounted sum of rewards along transitions in $\prm{B_1}$,
    while ignoring transitions into $V$.

    We proceed to show
    $\stateValueFunction^{\mdp{M} \times \prm{A}}_{\pi}(\mdpCommonState) = \stateValueFunction^{\mdp{M} \times \prm{B}}_{\pi}(\mdpCommonState)$ in all components (not just for $\mdpCommonState = \mdpCommonState^0$).

    When we fix an arbitrary policy $\pi$,
    we obtain the immediate reward vector
    $\mdpMatrixReward^{\prm{A};\pi}_\mdpCommonState$ ($\mdpMatrixReward^{\prm{B};\pi}_\mdpCommonState$)
    and probabilistic transition matrix
    $\mdpMatrixTransition^{\prm{A};\pi}_{\mdpCommonState, \mdpCommonState'}$ ($\mdpMatrixTransition^{\prm{B};\pi}_{\mdpCommonState, \mdpCommonState'}$)
    for $\mdp{M} \times \prm{A}$ ($\mdp{M} \times \prm{B}$).

    The state value Bellman equation for $\pi$ in $\mdp{M} \times \prm{A}$ can then be expressed in matrix form as in Equation~\ref{eqn:bellman-state-value}
    (and similarly for $\mdp{M} \times \prm{B}$).

    \begin{equation}
        \stateValueFunction^{\prm{A}}_{\pi} =
        \mdpMatrixReward^{\prm{A};\pi} +
        \gamma \mdpMatrixTransition^{\prm{A};\pi}
        \stateValueFunction^{\prm{A}}_{\pi}
        \label{eqn:bellman-state-value}
    \end{equation}

    We proceed to show that $\stateValueFunction^{\prm{B}}_{\pi}$ also solves Equation~\ref{eqn:bellman-state-value}, that is that Equation~\ref{eqn:bellman-state-value-mix} holds.

    \begin{equation}
        \stateValueFunction^{\prm{B}}_{\pi} =
        \mdpMatrixReward^{\prm{A};\pi} +
        \gamma \mdpMatrixTransition^{\prm{A};\pi}
        \stateValueFunction^{\prm{B}}_{\pi}
        \label{eqn:bellman-state-value-mix}
    \end{equation}

    We already know that the Equation~\ref{eqn:bellman-state-value-mix} holds in rows corresponding to $\mdpCommonState^0 \in \mathset{(\tilde{\mdpCommonState}, \prmCommonState^0) : \tilde{\mdpCommonState} \in \mdpStates^\mdp{M}}$,
    as we have shown $\stateValueFunction^{\prm{A}}_{\pi}(\mdpCommonState^0) = \stateValueFunction^{\prm{B}}_{\pi}(\mdpCommonState^0) = 0$.
    For $\mdpCommonState \in \mdpStates \setminus \mathset{(\tilde{\mdpCommonState}, \prmCommonState^0) : \tilde{\mdpCommonState} \in \mdpStates^\mdp{M}}$, we have $\mdpMatrixReward^{\prm{A};\pi}_{\mdpCommonState} = \mdpMatrixReward^{\prm{B};\pi}_{\mdpCommonState}$
    ($\prm{B}$ does not change the immediate reward on transitions from $\mdpCommonState \neq \mdpCommonState^0$).
    However, we also have $\mdpMatrixTransition^{\prm{A};\pi}_{\mdpCommonState, :} = \mdpMatrixTransition^{\prm{B};\pi}_{\mdpCommonState, :}$
    ($\mdp{M} \times \prm{B}$ transitions in the same way as $\mdp{M} \times \prm{A}$ from $\mdpCommonState \neq \mdpCommonState^0$).

\end{proof}


Now we can prove Theorem~\ref{thm:convergence}.

\begin{proof}
    Algorithm~\ref{alg:causal} starts with a PRM $\prm{A}$
    and applies a series transformations in order to obtain a new PRM $\prm{B}$.
    Then, it runs QRM for $(\mdp{M}, \mdp{B})$ instead of $(\mdp{M}, \mdp{A})$.
    Lemmas~\ref{lemma:t1},~\ref{lemma:t2}, and~\ref{lemma:t3} show that the optimal policy either
    remains the same when the transformations are applied (Transformation 2, 3),
    or that the optimal policy for the initial PRM can be easily recovered from the transformed PRM (Transformation 1).
    Line~\ref{line:product} applies Transformation 1 and 2.
    Lines~\ref{line:value-iteration}-\ref{line:t3-end} apply Transformation 3.
    Finally, convergence to optimal policy of Algorithm~\ref{alg:causal} then follows from the convergence to optimal policy of QRM.

\end{proof}

\section{Four Doors Case Study}
\label{sec:4-door-case-study}

Figure~\ref{fig:prm-4-doors} depicts the PRM for the 4-door task. As can be seen, increasing the number of doors leads to an exponential increase in the number of states.

\begin{figure}[t]
  \vspace{-5cm}
  \centering
  \hspace*{-6cm}
  \begin{tikzpicture}[->,>=stealth',shorten >=1pt,auto,node distance=3cm,semithick, scale=0.55]
    \node[state, initial, initial text=] (q0) at (-6, 0) {$q_0$};
    \node[state] (q1) at (-2, 10) {$q_1$};
    \node[state] (q2) at (1, -10) {$q_2$};
    \node[state] (q3) at (1, 3.5) {$q_3$};
    \node[state] (q4) at (1, -1.5) {$q_4$};

    \node[state] (q7) at (7, 8.5) {$q_7$};
    \node[state] (q5) at (7, 12) {$q_5$};
    \node[state] (q6) at (7, 4) {$q_6$};

    \node[state] (q10) at (7, -1.5) {$q_{10}$};
    \node[state] (q8) at (10, -3) {$q_{8}$};
    \node[state] (q9) at (10, -8) {$q_{9}$};

    \node[state] (q11) at (16,8) {$q_{11}$};
    \node[state] (q12) at (13, 5) {$q_{12}$};
    \node[state] (q13) at (13, 0) {$q_{13}$};
    \node[state] (q14) at (15, -10) {$q_{14}$};

    \node[state, accepting] (q15) at (17, 0) {$q_{15}$};

    \path (q0) edge[loop below] node[below,xshift = -2ex ]{$\lnot(\texttt{a} \lor \texttt{b} \lor \texttt{c} \lor \texttt{d} )$, $0$}node[below, yshift=-3ex]{\highlightyellow{$1$}}(q0);
    \path (q0) edge[] node[sloped]{$\texttt{a} \land \lnot(\texttt{b} \lor \texttt{c} \lor \texttt{d})$, $0$} node[sloped, swap]{\highlightyellow{$1$}} (q1);
    \path (q0) edge[] node[sloped, pos =0.3]{$\texttt{c} \lnot \texttt{d}$, $0$} node[sloped, swap, pos = 0.3]{\highlightyellow{$1$}} (q3);
    \path (q0) edge[bend left=10, looseness=1.5] node[midway,above, sloped]{$\texttt{b} \land \lnot (\texttt{c} \lor \texttt{d})$, $0$} node[sloped, swap]{\highlightyellow{$1$}} (q2);
    \path (q0) edge[] node[sloped]{$ \texttt{d}$, $0$} node[sloped, swap]{\highlightyellow{$1$}} (q4);

    \path (q1) edge[loop left] node[above, yshift=1ex]{$\lnot(\texttt{b} \lor \texttt{c} \lor \texttt{d})$, $0$}node[below, yshift=2.5ex]{\highlightyellow{$1$}}(q1);
    \path (q1) edge[bend left=20, looseness=1.5] node[midway,above, sloped]{$\texttt{b} \land \lnot(\texttt{c} \lor \texttt{d})$, $0$} node[sloped, swap]{\highlightyellow{$1$}} (q5);
    \path (q1) edge[bend left=10, looseness=1.5] node[midway,above, sloped]{$\texttt{c} \land \lnot \texttt{d}$, $0$} node[sloped, swap]{\highlightyellow{$1$}} (q6);
    \path (q1) edge[bend left=20, looseness=1.5] node[midway,above, sloped]{$ \texttt{d}$, $0$} node[sloped, swap]{\highlightyellow{$1$}} (q7);

    \path (q2) edge[loop below] node[below]{$\lnot(\texttt{a} \lor \texttt{c} \lor \texttt{d})$, $0$}node[below, yshift=-3ex]{\highlightyellow{$1$}}(q2);
    \path (q2) edge[bend left=120, looseness=2] node[midway,above, sloped]{$\texttt{a} \lnot(\texttt{c} \lor \texttt{d})$, $0$}node[sloped, swap]{\highlightyellow{$0.9$}}(q5);
    \path (q2) edge[bend right=10, looseness=1.5] node[midway,above, sloped]{$\texttt{a} \lnot(\texttt{c} \lor \texttt{d})$, $0$}node[sloped, swap]{\highlightyellow{$0.1$}}(q4);
    \path (q2) edge[bend right=10, looseness=1] node[midway,above, sloped]{$\texttt{c} \lnot\texttt{d}$, $0$}node[sloped, swap]{\highlightyellow{$1$}}(q8);
    \path (q2) edge[] node[sloped]{$\texttt{d}$, $0$}node[sloped, swap]{\highlightyellow{$1$}}(q9);

    \path (q3) edge[loop above] node[above, yshift=1ex]{$\lnot(\texttt{a} \lor \texttt{b} \lor \texttt{d})$, $0$}node[above, yshift=-1ex]{\highlightyellow{$1$}}(q3);
    \path (q3) edge[] node[sloped]{$\texttt{a} \lnot(\texttt{(b}\lor\texttt{d})$, $0$}node[sloped, swap]{\highlightyellow{$1$}}(q6);
    \path (q3) edge[bend left=20, looseness=0] node[midway,above, sloped]{$\texttt{b} \lnot\texttt{d}$, $0$}node[sloped, swap]{\highlightyellow{$1$}}(q8);
    \path (q3) edge[bend right=10, looseness=1.5] node[midway,above, sloped]{$\texttt{d}$, $0$}node[sloped, swap]{\highlightyellow{$1$}}(q10);

    \path (q4) edge[loop above] node[above, yshift=1ex]{$\lnot(\texttt{a} \lor \texttt{b} \lor \texttt{c})$, $0$}node[above, yshift=-1ex]{\highlightyellow{$1$}}(q4);
    \path (q4) edge[bend left=80, looseness=1.75] node[midway,above, sloped]{$\texttt{a} \lnot(\texttt{(b}\lor\texttt{c})$, $0$}node[sloped, swap]{\highlightyellow{$1$}}(q7);
    \path (q4) edge[] node[sloped, pos = 0.3]{$\texttt{b} \lnot\texttt{c}$, $0$}node[sloped, swap, pos = 0.3]{\highlightyellow{$1$}}(q9);
    \path (q4) edge[] node[sloped]{$\texttt{c}$, $0$}node[sloped, swap]{\highlightyellow{$1$}}(q10);

    \path (q5) edge[loop above] node[above,yshift =2ex]{$\lnot(\texttt{c} \lor \texttt{d})$, $0$}node[above, yshift=0ex]{\highlightyellow{$1$}}(q5);
    \path (q5) edge[bend left=20, looseness=1.5] node[midway,above, sloped,pos = 0.35]{$\texttt{c} \lnot\texttt{d}$, $0$}node[sloped, swap, pos = 0.35]{\highlightyellow{$1$}}(q11);
    \path (q5) edge[bend left=25, looseness=1.75] node[midway,above, sloped]{$\texttt{d}$, $0$}node[sloped, swap]{\highlightyellow{$1$}}(q12);

    \path (q6) edge[loop below] node[below,yshift = 1ex]{$\lnot(\texttt{b} \lor \texttt{d})$, $0$}node[below, yshift=-1ex]{\highlightyellow{$1$}}(q6);
    \path (q6) edge[] node[sloped, pos=0.37]{$\texttt{b} \lnot\texttt{d}$, $0$}node[sloped, swap, pos=0.4]{\highlightyellow{$1$}}(q11);
    \path (q6) edge[] node[sloped]{$\texttt{d}$, $0$}node[sloped, swap]{\highlightyellow{$1$}}(q13);

    \path (q7) edge[loop below] node[below]{$\lnot(\texttt{b} \lor \texttt{c})$, $0$}node[below, yshift=-3ex]{\highlightyellow{$1$}}(q7);
    \path (q7) edge[bend left=10, looseness=1.5] node[midway,above, sloped]{$\texttt{b} \lnot\texttt{c}$, $0$}node[sloped, swap]{\highlightyellow{$1$}}(q12);
    \path (q7) edge[] node[sloped]{$\texttt{c}$, $0$}node[sloped, swap]{\highlightyellow{$1$}}(q13);

    \path (q8) edge[loop right, in=60, out=10, looseness =3 ] node[above,sloped, xshift=-1ex, yshift=0ex]{$\lnot(\texttt{a} \lor \texttt{d})$, $0$} node[above, sloped, xshift=0ex, yshift=-4ex]{\highlightyellow{$1$}}(q8);

    \path (q8) edge[bend right=70, looseness=2.5] node[midway,above, sloped, pos = 0.3]{$\texttt{a} \lnot\texttt{d}$, $0$}node[sloped, swap, pos = 0.3]{\highlightyellow{$1$}}(q11);
    \path (q8) edge[bend left=10, looseness=1.5] node[midway,above, sloped, pos = 0.85]{$\texttt{d}$, $0$}node[sloped, swap, pos = 0.8]{\highlightyellow{$1$}}(q12);

    \path (q9) edge[loop below] node[below]{$\lnot(\texttt{a} \lor \texttt{c})$, $0$}node[below, yshift=-3ex]{\highlightyellow{$1$}}(q9);
    \path (q9) edge[bend right=90, looseness=2] node[midway,above, sloped, pos = 0.35]{$\texttt{a} \lnot\texttt{c}$, $0$}node[sloped, swap, pos = 0.35]{\highlightyellow{$1$}}(q12);
    \path (q9) edge[bend right=10, looseness=1.5] node[midway,above, sloped]{$\texttt{c}$, $0$}node[sloped, swap]{\highlightyellow{$1$}}(q14);

    \path (q10) edge[loop below] node[below]{$\lnot(\texttt{a} \lor \texttt{b})$, $0$}node[below, yshift=-3ex]{\highlightyellow{$1$}}(q10);
    \path (q10) edge[] node[sloped, pos = 0.4]{$\texttt{a} \lnot\texttt{b}$, $0$}node[sloped, swap, pos = 0.4]{\highlightyellow{$1$}}(q13);
    \path (q10) edge[] node[sloped]{$\texttt{b}$, $0$}node[sloped, swap]{\highlightyellow{$1$}}(q14);

    \path (q11) edge[loop right] node[right]{$\lnot\texttt{d} $, $0$}node[right, yshift=-3ex]{\highlightyellow{$1$}}(q11);
    \path (q11) edge[] node[sloped, pos = 0.3]{$\texttt{d}$, $0$}node[sloped, swap, pos = 0.3]{\highlightyellow{$1$}}(q15);

    \path (q12) edge[loop below] node[below]{$\lnot\texttt{c} $, $0$}node[below, yshift=-3ex]{\highlightyellow{$1$}}(q12);
    \path (q12) edge[] node[sloped]{$\texttt{c}$, $0$}node[sloped, swap]{\highlightyellow{$1$}}(q15);

    \path (q13) edge[loop below] node[below]{$\lnot\texttt{b} $, $0$}node[below, yshift=-3ex]{\highlightyellow{$1$}}(q13);
    \path (q13) edge[] node[sloped]{$\texttt{b}$, $0$}node[sloped, swap]{\highlightyellow{$1$}}(q15);

    \path (q14) edge[loop below] node[below]{$\lnot\texttt{a} $, $0$}node[below, yshift=-3ex]{\highlightyellow{$1$}}(q14);
    \path (q14) edge[] node[sloped]{$\texttt{a}$, $0$}node[sloped, swap]{\highlightyellow{$1$}}(q15);

    \path (q15) edge[loop right] node[above, yshift=1ex]{$true$}node[below]{\highlightyellow{$ $}}(q15);

  \end{tikzpicture}
  \caption{The PRM without causal info about the four-door task}
  \label{fig:prm-4-doors}
\end{figure}
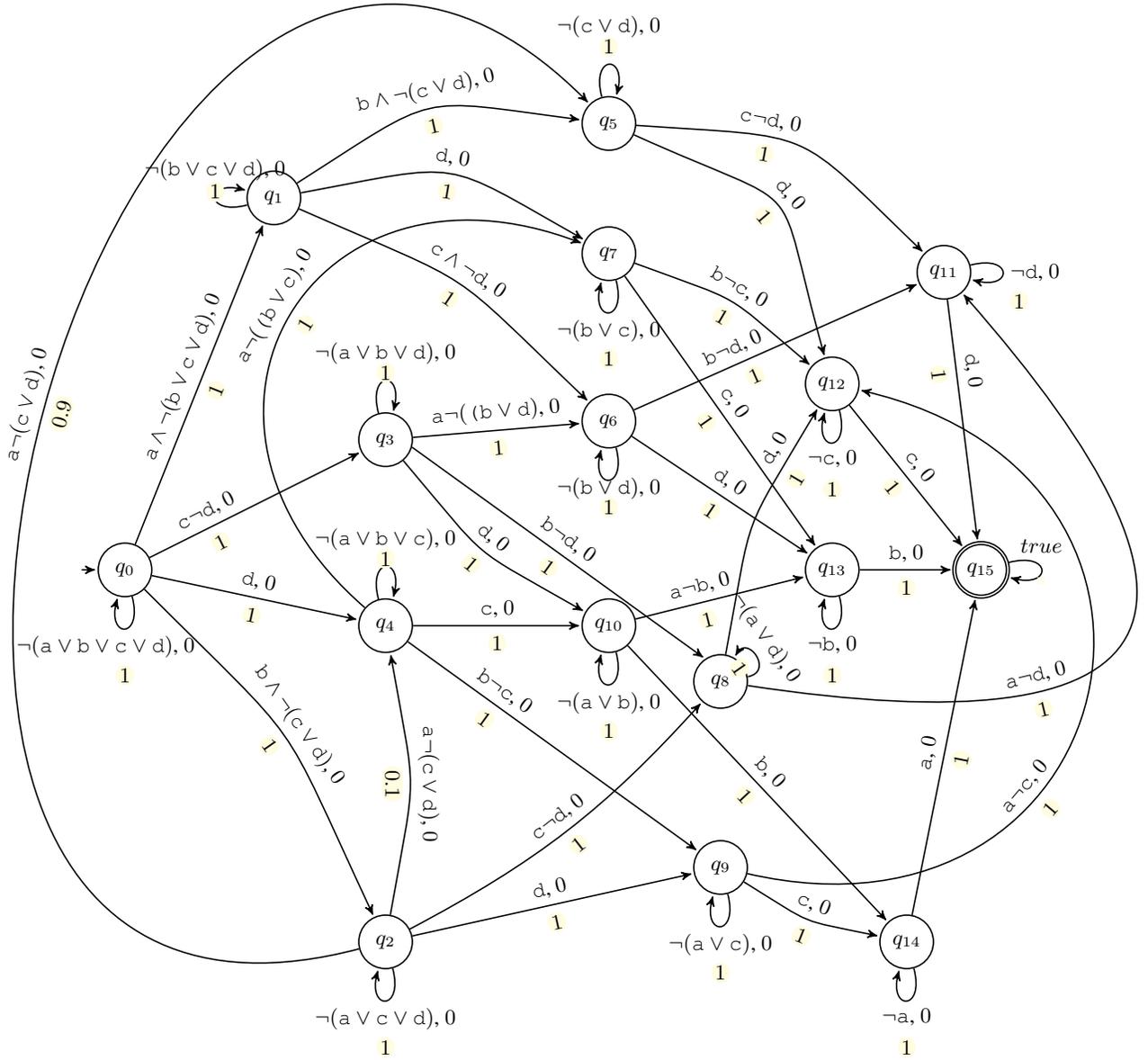


\end{document}